\documentclass{article}
\PassOptionsToPackage{numbers, compress}{natbib}




\usepackage[final]{neurips_2022}


\usepackage[utf8]{inputenc} 
\usepackage[T1]{fontenc}    

\usepackage{url}            
\usepackage{booktabs}       
\usepackage{amsfonts}       
\usepackage{nicefrac}       
\usepackage{microtype}      
\usepackage{xcolor}         

\usepackage{pdfpages}
\usepackage{selectp}

\usepackage{amsmath,amsfonts,bm}










\def\eqref#1{equation~\ref{#1}}









\def\1{\bm{1}}








\def\vw{{\bm{w}}}
\def\vx{{\bm{x}}}



\DeclareMathAlphabet{\mathsfit}{\encodingdefault}{\sfdefault}{m}{sl}
\SetMathAlphabet{\mathsfit}{bold}{\encodingdefault}{\sfdefault}{bx}{n}











\newcommand{\E}{\mathbb{E}}

\newcommand{\R}{\mathbb{R}}



\DeclareMathOperator*{\argmax}{arg\,max}

\newcommand{\DataX}{\mathcal{X}}

\newcommand{\DataS}{\mathcal{S}}

\newcommand{\dotp}[2]{\left<#1, #2\right>}

\newcommand{\normsm}[1]{\| #1 \|}

\newcommand{\normtwosm}[1]{\normsm{#1}_2}

\newcommand{\norminfsm}[1]{\normsm{#1}_\infty}

\newcommand{\abs}[1]{\left\lvert #1 \right\rvert}
\newcommand{\abssm}[1]{\lvert #1 \rvert}

\newcommand{\onec}[1]{\mathbbm{1}_{[#1]}}

\newcommand{\sphS}{\mathbb{S}}

\newcommand{\sgn}{\mathrm{sgn}}
\newcommand{\TV}{\mathrm{TV}}

\newcommand{\cs}{consistency score}
\usepackage{mycustom}
\bibliographystyle{plainnat}
\usepackage{hyperref}       
\usepackage{caption}
\usepackage{subcaption}

\usepackage{bbm}

\usepackage[shortlabels]{enumitem}
\setlist[enumerate,1]{leftmargin=0.6cm}
\setlist[itemize]{leftmargin=0.6cm}

\usepackage{microtype}
\usepackage{graphicx}
\usepackage{booktabs} 

\usepackage{hyperref}



\usepackage{amsmath}
\usepackage{amssymb}
\usepackage{mathtools,thm-restate}
\usepackage{amsthm}

\usepackage[capitalize,noabbrev]{cleveref}

\theoremstyle{plain}
\newtheorem{theorem}{Theorem}[section]

\theoremstyle{definition}
\newtheorem{definition}[theorem]{Definition}

\theoremstyle{remark}

\usepackage{xcolor}
\usepackage{graphicx}
\usepackage{enumitem}

\newcommand\dy[1]{\textcolor{olive}{DY: #1}}

\newcommand\revise[1]{\textcolor{black}{#1}}

\newcommand{\kf}[1]{{\color{orange}Kaifeng: #1}}
\newcommand{\map}{\bm{m}}
\newcommand{\method}{\mathcal{M}}
\newcommand{\mask}{M}

\newcommand{\textbasemetric}{base metric}

\newcommand{\textBaseMetric}{Base Metric}
\newcommand{\AUC}{g_{\textrm{AUC}}}

\newcommand{\ex}{\mathop{\E}}
\usepackage{authblk}
\usepackage{comment}

\usepackage[textsize=tiny]{todonotes}

\title{New Definitions and Evaluations for Saliency Methods:
Staying Intrinsic, Complete and Sound}

%

\author[*,1]{\bf Arushi Gupta}
\author[*,1]{\bf Nikunj Saunshi}
\author[*,1]{\bf Dingli Yu}
\author[1]{\bf Kaifeng Lyu}
\author[1]{\bf Sanjeev Arora}
\affil[1]{Princeton University\protect\\\texttt{\{arushig,nsaunshi,dingliy,klyu,arora\}@cs.princeton.edu}\vspace{0.1in}}
\affil[*]{Denotes equal contribution}




\begin{document}


\maketitle

\begin{abstract}
Saliency methods compute heat maps  that highlight portions of an input that were most {\em important} for the label assigned to it by a deep net. Evaluations of saliency methods  convert this heat map into a new {\em masked input} by retaining the $k$ highest-ranked pixels of the original input and replacing the rest with \textquotedblleft uninformative\textquotedblright\ pixels, and checking if the net's output is mostly unchanged. This is usually seen as an {\em explanation} of the output, but the current paper highlights  reasons why this inference of causality may be suspect. Inspired by logic concepts of {\em completeness \& soundness}, it observes that the above type of evaluation focuses on completeness of the explanation, but ignores soundness.  New evaluation metrics are introduced to capture both notions, while staying in an {\em intrinsic} framework---i.e., using the dataset and the net, but no separately trained nets, human evaluations, etc. A simple saliency method is described that matches or outperforms prior methods in the evaluations. Experiments also suggest new intrinsic justifications, based on soundness, for popular heuristic tricks such as TV regularization and upsampling.

\end{abstract}
    \section{Introduction}
\label{sec:intro}

{\em Saliency methods}  try to understand why the deep net gave a certain answer on a particular input, and are an important component of  explainability, fairness,  robustness, etc., in deep learning. 
This paper restricts attention to 
the (large) class of saliency methods that 
return an importance score for each coordinate of the input --- often visualized as a  heat map --- which captures the coordinate's importance to the final decision.\footnote{Heat maps suffice for recognition/classification tasks; other tasks may require more complex explanations.}  Early saliency methods used an axiomatic system of ``credit attribution'' to individual input coordinates,  using backpropagation-like methods \citep{binder2016layer,selvaraju2016grad} and cooperative game theory, such as Shapley values~\citep{lundberg2017unified,yeh2020completeness}. See \Cref{sec:prior} and \citet{samek2019explainable} for discussion of strengths and limitations of such methods. 
More recent methods try to find heat maps that are largely concentrated on a smaller set of pixels/coordinates and are discussed further below.

\looseness-1 There exists an ecosystem of {\em evaluation metrics} to  evaluate saliency methods. Evaluations can be {\em extrinsic}, involving   human evaluation \citep{adebayo2018sanity}, and comparison to certain ground truth explanations \citep{zhang2018top}. We restrict attention to
{\em intrinsic} evaluations, which use computations involving the net itself and the heat/saliency map --- but no human evaluations or evaluations that involve training new deep nets. Popular intrinsic evaluations include {\em saliency metric}~\citep{dabkowski2017real} and insertion \& deletion metrics~\citep{petsiuk2018rise}.

\looseness-1 A frequent idea in intrinsic evaluations (see \Cref{sec:stat} for references) is to create a new composite input --- or sequence of such inputs --- using the heat map and the original input, and to evaluate the original net on this composite input. For example, if $\mask$ is a binary vector with $1$'s in the $k$ coordinates with the highest values in the heat map, then $x \odot \mask$ (with $\odot$ denoting coordinate-wise multiplication) can be viewed as a masked input where only $k$ of the original coordinates of $x$ remain. It is customary to replace the  zeroes in this masked input with ``uninformative'' values, which we refer to as {\em gray} pixels. 
This masked input is fed into the original net\footnote{This raises a potential issue of distribution shift because the net never trained on masked inputs. In practice, especially for image data, trained nets continue to work fine on masked inputs.} to check if the net outputs the same label as on the full input --- and, if so, one should conclude that the unmasked coordinates of $x$ were salient to the net's output. 
 Not surprisingly, recent methods~\citep{fong2017interpretable,dabkowski2017real,phang2020investigating} for finding heatmaps use an objective that tries to directly optimize performance on such evaluations, greatly outperforming older axiom-based methods.

{\em Should we trust such mask-based explanations}? At first sight, this question may appear naive. In the above-described masked input
$x \odot M$, some $k$ pixels were copied from the original image and the remaining pixels were made \textquotedblleft gray.\textquotedblright~
If the original net outputs essentially the same label as it did on the full image,  does this not {\em prove} that the portion that came from the mask $M$  must have {\em caused} the net's output on the full image?  The answer is \textquotedblleft No\textquotedblright, because the {\em positions} of the gray pixels can carry some signal.

\Cref{fig:artifact-small} shows that  in standard  datasets,  it is possible to create masks out of most inputs such that the masked input  can cause the net to assign high probability for a {\em different}  label than in its original prediction.  Prior works referred to this phenomenon as \textquotedblleft mask artifacts\textquotedblright\ and suggested using total variation (TV) regularization to mitigate this issue. But \Cref{fig:artifact-small} shows TV regularization is not a full solution to masks that cause the net to (incorrectly) flip the answer.

\myparagraph{Soundness needed.} The masked input method is trying to {\em prove} that a certain portion of the image  \textquotedblleft caused\textquotedblright\ the net's output.  Usually logical reasoning must display both {\em completeness} (i.e., all correct statements are provable) and {\em soundness} (incorrect statements cannot be proved).
Checking that the net's output is essentially unchanged with the masked input is akin to verifying {\em completeness}.

But this by itself should not be convincing because, as mentioned, it is possible to use the method to find a different mask to justify another (i.e., wrong) label.   {\em Soundness}  would require verifying that the same saliency method cannot be used to produce masked inputs that make the net output a different label.
Together, completeness and soundness ensure that the evaluation of the masked inputs produced using various labels approximately tracks the model's probability of assigning the label (see Section~\ref{sec:completeness_and_soundness}).

\begin{figure}[t!]
    \vspace{-0.1in}
    \begin{subfigure}{.5\textwidth}
        \centering
        \includegraphics[width=0.95\textwidth]{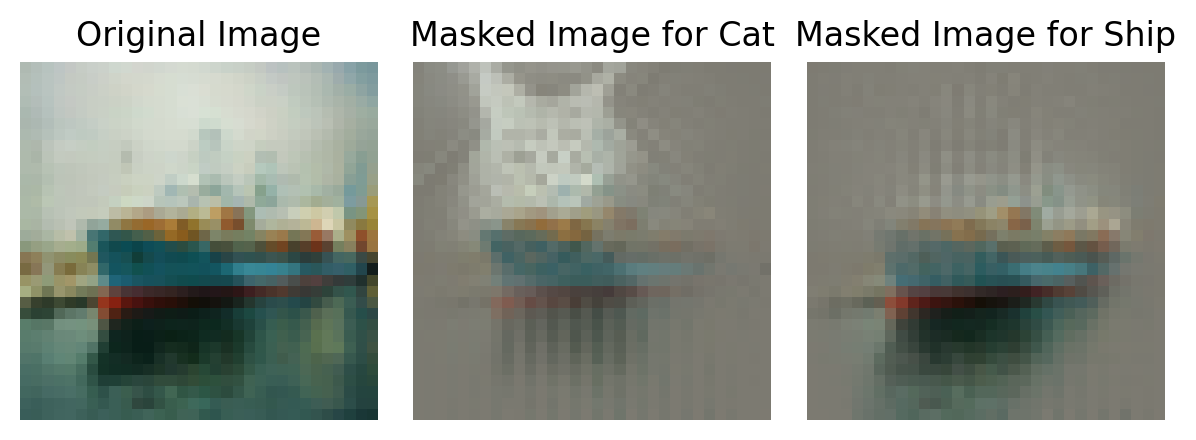}
    \end{subfigure}%
    \begin{subfigure}{.5\textwidth}
        \centering
        \includegraphics[width=0.95\textwidth]{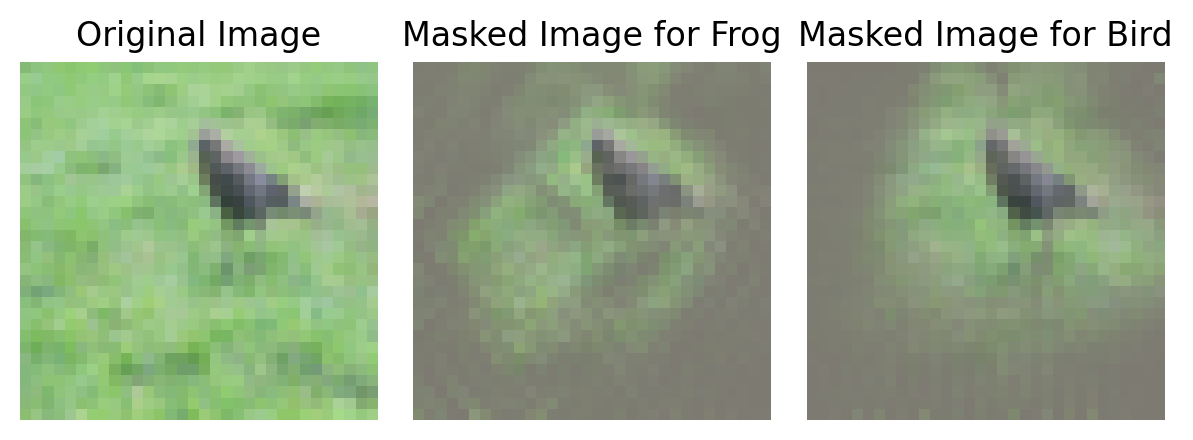}
    \end{subfigure}
    \caption{Masked CIFAR-10 images generated by our procedure with TV regularization.
    ``Artifacts'' exist for masks generated for incorrect labels; more examples can be found in \Cref{fig:artifact} in Appendix.
    The base model outputs the correct label on the original image (ship and bird resp.) with probability at least 0.99, and assigns probability at most $10^{-5}$ for the incorrect label (cat and frog resp.). With the generated masks, the AUC metric (defined in \Cref{sec:methods_metrics}) for the correct label remains high (around 0.94 and 0.90), which corresponds to {\em completeness}, but AUC metric for the incorrect label rises tremendously (around 0.18 for cat mask, 0.71 for frog mask.) This suggests violation of {\em soundness}. }
    \label{fig:artifact-small}
    \vspace{-0.2in}
\end{figure}

\noindent{\bf Main Contributions of this paper:} 
\vspace{-0.1in}
\begin{itemize}

\item \Cref{sec:completeness_and_soundness} draws connections to logical proof systems and formalizes completeness and soundness in the context of saliency methods.
These are intrinsic evaluations of saliency without appealing to human judgements (e.g., whether or not machine-generated masks contain artifacts).
Our notion of completeness and soundness requires saliency methods to output a map for {\em every label} and not just the model prediction.

\item \Cref{sec:procedures} revisits methods that learn masks through optimization.
Incorporating soundness, we propose a simple optimization method that maximizes the probability for any given label\footnote{Computation cost proportional to the number of labels is only incurred while designing/evaluating the saliency method but not deployment time. Another alternative is to verify soundness condition for top $k$ labels.}, rather than only the label deemed most likely by the net.
A key design choice of this method is the pixel replacement strategy during training: non-salient pixels are replaced with pixels of a random image, instead of using gray pixels, blurring or counterfactual models. \footnote{Code for computing completeness, soundness, and our masking method available at \url{https://github.com/agup/soundness\_saliency.git}}

\item \looseness-1 Experiments (in \Cref{sec:expers}) estimate completeness and soundness scores for various methods.
Our simple method does well on completeness and soundness as one would expect, but the slight surprise is that it performs comparably to prior masked-based methods on earlier saliency evaluations as well, suggesting that soundness can be achieved without paying much in completeness.
  Heuristic choices in existing mask-based methods such as  TV regularization and upsampling, which were hitherto justified extrinsically (i.e., they make masks look natural to humans), receive  an {\em intrinsic} and precise justification: they improve soundness. 
   A theoretical result in \Cref{sec:linear} complements this experimental finding by showing that TV regularization helps saliency methods even in a simple linear classification setting, not just for vision data.
  
\end{itemize}


	\section{Prior approaches}
\label{sec:prior}

We relegate a more thorough description of prior work to Appendix \ref{sec:additionalbackground}, but mention some common saliency and saliency evaluation methods here. Saliency methods aim to explain a model's decision about an input. Saliency evaluation methods aim to evaluate the goodness of a saliency method. 

\textbf{Saliency methods} include \emph{backpropagation based approaches} such as Gradient $\odot$ Input~\citep{shrikumar2017learning},  iGOS++~\citep{khorram2021igos++}, Robust Perturbations \citep{kim2021robust},  LRP~\citep{binder2016layer}, GradCAM~\citep{selvaraju2016grad}, Smooth-Grad \citep{smilkov2017smoothgrad}. Another line of work is \emph{masking methods} which include techniques based on averaging over randomly sampled masks \citep{petsiuk2018rise}, optimizing over meaningful  mask perturbations \citep{fong2017interpretable}, and real time image saliency using a masking network \citep{dabkowski2017real}. Pixels that have been removed from the image by the mask may be replaced by greying out, by Gaussian blurring as in \citep{fong2017interpretable}, or with infillers such as CA-GAN \citep{yu2018generative} used in \citep{chang2018explaining,phang2020investigating}, or DFNet \citep{hong2019deep}. The pixel replacement strategy we used is closely related to hot deck imputation \citep{rubin1976inference}, where features may be replaced either by using the mean feature value (analogous to replacing with grey) or sampling from the marginal feature distribution (analogous to replacing with image pixels sampled from other training images). Some prior work \citep{carter2019made} has found that mean imputation does not significantly affect model output on the beer aroma review dataset. On Imagenette, by contrast, we found that replacement strategy can matter (See Figure \ref{fig:scatter_completeness_soundness}). \citet{de2020decisions} find masks using differentiable masking. 
\emph{Boolean logic} is another approach for saliency methods \citep{ignatiev2019relating,ignatiev2019abduction,macdonald2019rate,mu2020compositional,zhou2018interpreting}. 

\myparagraph{Arguments about saliency.}  Discussions about the methods and the meaning of saliency appear, among others, in  \citep{seo2018noise,fryer2021shapley,gu2018understanding,sundararajan2020many}. \revise{Some reveal situations where Shapley axioms work against feature selection or where Shapley values may be calculated in conflicting ways \citep{fryer2021shapley, sundararajan2020many}. Others question efficacy of saliency methods that add noise~\citep{seo2018noise}, making explanations  non class discriminative \citep{gu2018understanding}.  }

\myparagraph{Saliency evaluation methods.} Extrinsic evaluation metrics include the {\bf WSOL} metric, and  {\bf Pointing Game} metric proposed by \citet{zhang2018top} and {\bf ROAR}  \citep{hooker2019benchmark}. Other more intrinsic methods include early saliency evaluation techniques like {\bf MorF} and {\bf LerF} \citep{samek2016evaluating},  {\bf Insertion and Deletion game} proposed by \citet{petsiuk2018rise}, which involve either inserting pixels in order of most importance or deleting pixels in order of most importance.  {\bf BAM} \citep{yang2019benchmarking} creates saliency maps by pasting object pixels from MSCOCO \citep{lin2014microsoft}.  The  {\bf Saliency Metric} proposed by \citet{dabkowski2017real} thresholds saliency values above some $\alpha$ chosen on a holdout set, finds the smallest bounding box containing these pixels, upsamples and measures the ratio of bounding box area to model accuracy on the cropped image, $s(a,p) = \log(\max(a,0.05)) - \log(p)$ where $a$ is the area of the bounding box and $p$ is the class probability of the upsampled image. 
	\citet{adebayo2018sanity}
proposed ``sanity checks'' for saliency maps. They note that if a model's weights are randomized, it has not learned anything, and therefore the saliency map should not look coherent. They also randomize the labels of the dataset and argue that the saliency maps for a model trained on this scrambled data should be different than the saliency maps for the model trained on the original data. We study the effect of model layer randomization on our method in Appendix section \ref{subsec:sanity} and find that randomizing the model weights does cause our saliency maps to look incoherent. 
\citet{tomsett2020sanity} also discover sanity checks for saliency metrics, finding that saliency evaluation methods can yield inconsistent results. They evaluate saliency maps on reliability, i.e. how consistent the saliency maps are. To measure a method's reliability, because access to ground truth saliency maps are not available, they use three proxies 1) inter-rater reliability, i.e. how whether a saliency evaluation metric is able to consistently rank some saliency methods above others, 2) inter-method reliability, which indicates whether a saliency evaluation metric agrees across different saliency methods, and 3)internal consistency reliability, which measure whether different saliency methods are measuring the same underlying concept.

\myparagraph{Discussions.}
\revise{\citet{brunke2020evaluating} show that perturbation methods are sensitive to baseline and \citet{petsiuk2018rise} point out that human centric explanations (based on bounding boxes) may not reveal why the model made a certain decision. Our notion of intrinsic saliency method outputs the mask but introduces soundness to validate it. }


\section{Masking explanations and completeness/soundness}
\label{sec:stat}

The goal of a saliency method is to produce explanations for the outputs of a given deep net $f$.
We think of a saliency method (\Cref{def:heatmap}) producing explanations --- a heat map in this case --- as providing a {\em proof} of the statement {\em $f$ outputs the label $a$ for input $x$}.  Inspired by logical proof systems, we propose that intrinsic evaluations for saliency methods should test for both completeness and soundness, and the rest of the section provides definitions for these notions.

\subsection{Saliency methods and AUC metric}
\label{sec:methods_metrics}

For a net $f$, let $f(x, a)$ denote the (post-softmax) output probability of the net on input $x$ for label $a$.
We are interested in saliency methods that output a heatmap with a scalar per coordinate of the input.
\begin{definition}[Saliency Method]
\label{def:heatmap}
A saliency method $\method$ is an algorithm that takes 1) a deep net $f$, 2) an input $x$, and 3) a label $a$   and produces a heatmap $\map = \method(f,x,a) \in \mathbb{R}^{\text{dim}(x)}$.
\end{definition}
Note that in the above definition, $\method$ can access any part of $f$ including intermediate layers and gradients.
Examples of methods that can be interpreted as returning heatmaps are mask-based methods \citep{fong2017interpretable,dabkowski2017real,chang2018explaining,agarwal2020explaining,phang2020investigating} that output heatmaps in $[0,1]^{\text{dim}(x)}$, backpropagation-like methods \citep{binder2016layer,selvaraju2016grad}, Shapely values \citep{lundberg2017unified,yeh2020completeness}, and Gradient $\odot$ Input.

\looseness-1 Given such a saliency method, an important question is how to evaluate the quality of the ``explanations'' or heat maps it produces.
A common evaluation idea is to use the behavior of the net $f$ on a ``masked'' input --- only the top few pixels based on the heat map in question are retained, while the others are hidden appropriately.
Inspired by this idea we define evaluation metrics for saliency methods.
We abstract the step of retaining the top few pixels into an {\em input modification process}, defined below.

\begin{definition}[Input Modification Process $\Gamma$]
Let $\Gamma$ be a potentially randomized procedure that takes an input $x$ and a heatmap $\map$ and generates a modified input $\tilde{x}$. i.e. $\tilde{x} \sim \Gamma(x, \map)$.
\end{definition}
This input modification process defines a {\em \textbasemetric} that is the starting point of the completeness and soundness metrics that will be defined later.
\begin{definition}[\textBaseMetric]
\label{def:basemetric}
Let $x$ be an input (image), $a$ a label, $\map$ a heatmap and $\Gamma$ an input modification process.  Then, a {\textbasemetric} is a function $g(x, a, \map) = \mathbbm{E}_{\tilde{x} \sim \Gamma(x, \map)} [f(\tilde{x},a)]$, where $f$ is the neural net of interest\footnote{In general we can replace $f$ with any target function $h$ that depends on $f$}.
\end{definition}
The {\textbasemetric} measures the expected output of $f$ acting on \emph{modified} inputs $\tilde{x}$ and label $a$ and different choices for $\Gamma$ lead to different \textbasemetric s.
Some examples of $\Gamma$ that stay in the {\em intrinsic} framework are: graying out the pixels outside some subset $S$ of $\map$, replacing them by pixels from a Gaussian blurring of $x$~\citep{fong2017interpretable}. 
Input modification through $\Gamma$ also shows up in saliency methods, including our proposed method in \Cref{sec:procedures}; we discuss prior choices of $\Gamma$ in that section.
For our evaluation metrics, however, we choose $\Gamma$ that grays out remaining pixels.

The popular Area-Under-the-Curve (AUC) evaluation metrics~\citep{petsiuk2018rise} of saliency methods can, in fact, be reinterpreted in terms of our {\textbasemetric}.
We state the {\em insertion game} and its AUC below, since it forms the basis of our subsequent completeness and soundness metrics, and then define the AUC metric as an instantiation of \Cref{def:basemetric}.
\begin{definition}[AUC of insertion game]
\label{def:AUC}
    For $s=1$ to $\text{dim}(x)$ take the top $s$ pixels as per $\map$, and plot the probability $f(x, a)$ given by model to label $a$ on the input $x$, where the top $s$ pixels of $x$ are retained and remaining pixels are assigned a default gray value. Return the area under the curve.
\end{definition}
\begin{definition}[AUC metric]
\label{def:AUCmetric}
    We denote by $\AUC(x, a, \map)$ the {\textbasemetric} when a modified input $\tilde{x} \sim \Gamma(x,\map)$ is obtained as follows:
    \begin{itemize}
    \setlength\itemsep{0em}
        \item Sample $s\sim\{1, \dots, \text{dim}(x)\}$ and pick $S$ to be top $s$ pixels as per $\map$.
        \item Set $\tilde{x}[S] = x[S]$ and $\tilde{x}[\bar{S}] = x_{\text{gray}}[\bar{S}]$ where $x_{\text{gray}}$ is a ``default'' input with gray values.
    \end{itemize}
\end{definition}
We note that $\AUC$ is indeed equivalent to the AUC insertion game from \Cref{def:AUC}.
While many choices of {\textbasemetric}s can be used, we use this AUC metric for subsequent definitions since it confers two benefits: 1) it allows for fair comparison between methods since all methods are evaluated at the same sparsity level; 2) it considers the effect of multiple sparsity levels instead of picking an arbitrary one.
We are now ready to present the formalization of completeness and soundness.

\begin{figure}
    \centering
    \vspace{-0.1in}
    \includegraphics[width=\textwidth]{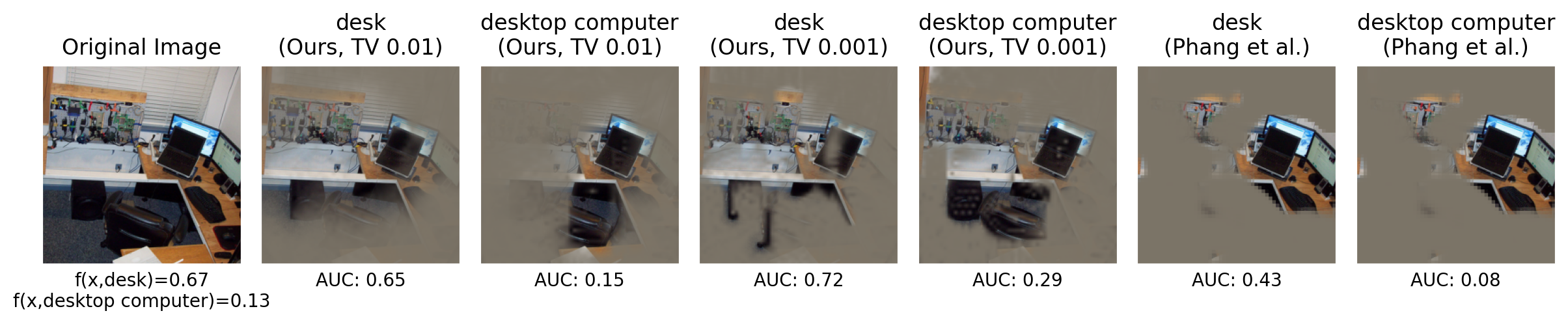}
    \caption{ImageNet image and masked versions for labels desk and desktop computer. Heatmaps generated by our method with $\lambda_{TV}=0.01$, $\lambda_{TV}=0.001$ and \citet{phang2020investigating}'s method, respectively.}
    \label{fig:desktop}
\end{figure}

\subsection{Completeness and soundness}\label{sec:completeness_and_soundness}

Just as for proof systems, we would like a saliency method to be logically complete and sound, i.e., it should be able to justify the label output by the net, but no other label\footnote{We focus on single label multi-class classification; extensions to multi-label are left for future work.}. 
The statements to be ``proved'' are of the form {\em ``does $f$ think $x$ has label $a$?''}.
We use the model output probability $f(x, a)$ as fractional truth value of the ``statement'' $(x, a)$.
For a saliency map $\map$ that a saliency method $\method$ generates, we interpret the AUC metric $\AUC(x, a, \map)$ as the evaluation of truth value of  the ``proof'' $\map$, as adjudged by the {\textbasemetric} $\AUC$.
Thus in this context, completeness for $\method$ would entail that whenever $f(x, a)$ is large (i.e., the statement is sufficiently true), $\AUC(x, a, \map)$ should be large (a sufficient good proof can, and is, generated).
Soundness, on the other hand, would entail that $\AUC(x, a, \map)$ should be small whenever $f(x, a)$ is small.
Thus we would like $\AUC(x, a, \map)$ to track the value of $f(x, a)$ through a lower bound and upper bound as follows:

\begin{definition}[Completeness and Soundness] \label{def:soundcompletestat}
      Consider a saliency method $\method$.
      For $\alpha, \beta \le 1$, the method $\method$ is  {\em $\alpha$-complete on $f, x, a$} if the heatmap $\map = \method(f, x, a)$ it produces satisfies \framebox{$\AUC(x, a, \map) \ge \alpha~ f(x, a)$}, while $\method$ is {\em $\beta$-sound on $f, x, a$} if \framebox{$\AUC(x, a, \map) \le \frac{1}{\beta} f(x, a)$}.
      Conversely the completeness and soundness scores for a heap map $\map$ on $(x,a)$ are defined as:
	\begin{align} \label{eq:cs}
	    \alpha(x, a)
	    = \min \left\{ \frac{\max\{\AUC(x, a, \map), \epsilon_{1}\}}{f(x, a)},  1\right\},~~
	    \beta(x, a)
	    = \min \left\{\frac{\max\{f(x, a), \epsilon_{2}\}}{\AUC(x, a, \map)}, 1\right\} .
	\end{align}

\end{definition}
\begin{definition}[Worst case completeness and soundness]
\label{def:worst_cs}
    For a saliency method $\method$, these metrics are defined by taking the worst case of the completeness/soundness over labels $a$ for a given $x$ that is sampled from the underlying input distribution.
    More precisely,
    $
        \alpha(\method) = \ex_{x}\left[\min_{a} \alpha(x, a)\right],
        \beta(\method) = \ex_{x}\left[\min_{\beta} \beta(x, a)\right]$.
\end{definition}

Our metrics implicitly require a saliency method to output a meaningful map for every label (or at least all labels assigned non-negligible probabilities by the net) and not just for the top model prediction.
Notice, $\alpha$-completeness means that if the model outputs a high probability for label $a$, then the probability for label $a$ after seeing only the coordinates in the salient sets is also high, while $\beta$-soundness means that this probability is not too high. Checking both conditions verifies if the AUC score $\AUC(x, a, \map)$ of a map $\map$ is in the interval $\left[\alpha f(x, a), \frac{1}{\beta} f(x, a)\right]$. 
We want $\alpha, \beta$ as close to $1$ as possible. 
This notion is different from the usual AUC insertion game, which just requires $\AUC(x, \hat{y}, \map)$ to be as large as possible for the model prediction $\hat{y} = \argmax_{a} f(x, a)$.
Moreover, our worst-case completeness and soundness metrics require the $\AUC(x, a, \map)$ to roughly {\em match} the model prediction $f(x, a)$, {\em for every pair $(x, a)$},\footnote{Works like Gradient $\odot$ Input and LRP use {\em completeness} in a somewhat different sense. It requires the sum of the coordinate scores in the heatmap to {\em exactly} equal the logit of the label. Our completeness + soundness together try to approximately match the $\AUC$ score to the output probability on this label.}
where the thresholds $\epsilon_{1}, \epsilon_{2}$ in \Cref{eq:cs} help in ignoring the case where the model outputs very small probabilities --- for instance we can ignore any label for completeness if $f(x,a) < \epsilon_{1}$. 

\subsection{Illustration of completeness and soundness on an ImageNet example}
\label{sec:interpretation}
See \Cref{fig:desktop}. For this image $x$, the model assigns reasonably high probabilities to labels $a_{1}=\text{``desk''}$ and $a_{2} = \text{``desktop computer''}$, with $f(x, a_{1}) \approx 0.67$ and $f(x, a_{2}) \approx 0.13$.
We compare the maps computed by three methods --- $\method_{1}$: our mask learning procedure from \Cref{sec:procedures} with TV regularization 0.01, $\method_{2}$: our method with TV of 0.001, and $\method_{3}$: the mask learned by \citet{phang2020investigating}.
The corresponding maps generated are denoted by $\map_{i}, i\in\{1,2,3\}$.
For this discussion we only consider the effect of labels $a_{1}$ and $a_{2}$ on completeness and soundness, and we assume the thresholds satisfy $\epsilon_{1} = \epsilon_{2} = 0$.

\myparagraph{Completeness.}
For the label $a_{1}$, the AUC scores $\AUC(x,a_{1},\map_{i})$ for the three methods are roughly 0.65, 0.70 and 0.43 respectively.
Thus all three maps can ``certify'' the label $a_{1}$ well enough, with $\map_{2}$ doing the best job.
Consequently the completeness scores (\Cref{eq:cs}) for $\map_{1}$ on the pair $(x, a_{1})$ is $\alpha_{1}(x, a_{1}) = \min\{\frac{0.65}{0.67}, 1\} \approx 0.97$; similarly $\alpha_{2}(x, a_{1}) = 1$ and $\alpha_{3}(x, a_{1})= \frac{0.43}{0.67} \approx 0.64$.
Similarly for label $a_{2}$ that is assigned a probability of 0.13, the completeness scores $\alpha_{i}(x, a_{2})$ are 1, 1, and 0.58.
Thus the worst case completeness from \Cref{def:worst_cs}, i.e. $\min_{j\in\{1,2\}} \alpha(x, a_{j})$, for the three methods are 0.97, 1 and 0.58.
Based on completeness $\method_{2}$ seems like the best method; however soundness will paint a different picture.

\myparagraph{Soundness.} Using the same AUC scores we compute the soundness scores, defined in \Cref{eq:cs} as $\beta_{i}(x, a) = \min\{\frac{f(x, a)}{\AUC(x, a, \map_{i})}, 1\}$.
This gives us soundness scores on $a_{1}$ of 1, 0.95 and 1.
The more interesting case is label $a_{2}$, for which the model does not assign very high probability (0.13).
Here the mask from $\method_{2}$ still gets a very high AUC score of 0.29, leading to a soundness score $\beta_{2}(x, a_{2}) \approx \frac{0.13}{0.29} \approx 0.44$.
$\method_{1}$ on the other hand is not as overconfident in its explanation as $\method_{2}$, and gets $\beta_{1}(x, a_{2}) = \frac{0.13}{0.15} \approx 0.89$.
The worst case soundness for three methods ends up being 0.89, 0.44 and 1.

\looseness-1 This examples highlights how completeness and soundness together can give a more nuanced comparisons of saliency maps, including understanding the benefit of TV regularization.
Experiments in \Cref{sec:expers} do a more thorough comparison of many saliency methods on these and prior metrics.

	\section{Procedures for finding masking explanations}
 \label{sec:procedures}


\looseness-1 We propose a very simple method to find saliency maps with good empirical completeness and soundness scores.
Detailed intuitions and implementations appear in \Cref{sec:procedures-details}.
Our method is similar to prior work on mask-based saliency maps and is based on the idea of SSR (Smallest Sufficient Region)\footnote{We do not incorporate any SDR (Smallest Destroying Region) component in our method.}.
In particular for an input $x$ and label $a$, given a network $f$, the main goal is to learn a map (or mask) $\mask\in\{0, 1\}^{hw}$ such that $\ex_{\tilde{x}\sim \Gamma(x, \mask)}[-\log(f(\tilde{x},a))]$ is minimized, i.e. the probability that the network assigns to a modified (or composite) input $\tilde{x}$ is high.
The key difference from prior masked-based methods is our choice of $\Gamma$, which retains the pixels of $x$ corresponding to $\mask$, but replaces the rest of the pixels with values from a {\em randomly sampled} image $\bar{x}$ from the training set\footnote{We think other random image distributions should work too.} $\DataX$, i.e.
\begin{align}
    \tilde{x}\sim\Gamma(x, a) ~\equiv~ \bar{x} \sim \DataX, \tilde{x} = \mask \odot x + (1 - \mask) \odot \bar{x}\label{eq:composite}
\end{align}
Modifications procedures ($\Gamma$) that have been considered in prior work include: graying or blurring out the remaining pixels \citep{fong2017interpretable}, and using a conditional image generative model to fill in the remaining pixels \citep{chang2018explaining,agarwal2020explaining}.
Replacing with random image pixels forces the mask-finding algorithm to solve a more difficult task of having the network predict a high confidence for the label, despite the presence of another image.
This amounts to grafting salient pixels of $x$ on top of a random image, reminiscent of BAM evaluations~\citep{yang2019benchmarking} for saliency methods.
We also note a methodological difference from \citet{dabkowski2017real,phang2020investigating} in that we do not train a separate neural network to output the mask.

\looseness-1 As is standard, we relax the domain of masks $\mask$ from binary $\{0, 1\}^{hw}$ to continuous $[0,1]^{hw}$ parametrized by a sigmoid.
We define a composite input, where the part of $x$ on $\mask$ is superimposed onto a distractor $\bar{x} \sim \DataX$ as $\tilde{x} = \mask \odot x + (1-\mask) \odot \bar{x}$.
We then have the following simple \emph{vanilla} objective, where the $\ell_1$ norm penalty on $\mask$ helps to reduce the size of masks.
\begin{align}
    L(\mask; (x,a)) =&~ \E_{\bar{x}\sim\DataX} \left[-\log(f(\tilde x, a))\right] + \lambda_1 \|\mask\|_1,
    \text{where } \tilde x = \mask \odot x + (1-\mask) \odot \bar{x}.
    \label{eq:vanilla_obj}
\end{align}
\myparagraph{Promoting soundness.}  Tricks used in prior mask-finding approaches were tested for their effect on soundness. 
These include 1) Total-Variation (TV) penalty \citep{fong2017interpretable} and 2) upsampling of the mask from a lower resolution one \citep{petsiuk2018rise} by learning a low-resolution mask at scale $s$, $\mask\in[0,1]^{hw/s^2}$. For example, on ImageNet, $s=1$ corresponds to learning a $224\times224$ mask, while $s=4$ corresponds to learning a $56 \times 56$ mask and upsampling by a factor of $4$. Incorporating 1) and 2) leads to the following \emph{modified} objective  
\begin{align}
     L(\mask; (x,a)) =&~ \E_{\bar{x}\sim\DataX} \left[-\log(f(\tilde x, a))\right]
     +\lambda_{TV} TV(\mask^{\times s}) + \lambda_1 \|\mask^{\times s}\|_1\nonumber \nonumber \\
    \text{where } \tilde x =&~ \mask^{\times s} \odot x + (1-\mask^{\times s}) \odot \bar{x},
    \label{eq:main_obj}
\end{align}
and $\mask^{\times s}\in\R^{hw}$ is obtained by upsampling $\mask$ by a factor of $s\in\{1,4\}$ via bilinear interpolation. 

While the motivation cited for these two commonly employed ``tricks'' is to avoid artifacts \citep{fong2017interpretable}, in our intrinsic framework ``artifacts'' does not have an  interpretation.  What looks like an artifact to a human observer may be relevant to the net's decision-making. Indeed, our experiments show that while TV penalty or upsampling does produce better looking masks, they lead to a drop in the {\em  completeness metric}. But they significantly improve {\em soundness}. This provides an intrinsic justification for the use of such tricks, instead of extrinsic justifications related to human interpretability. 

\section{On benefit of TV regularization in saliency}
\label{sec:linear}


\begin{figure}[!t]
    \vspace{-0.1in}
\begin{subfigure}[b]{.5\textwidth}
	\centering
	\includegraphics[scale=0.4]{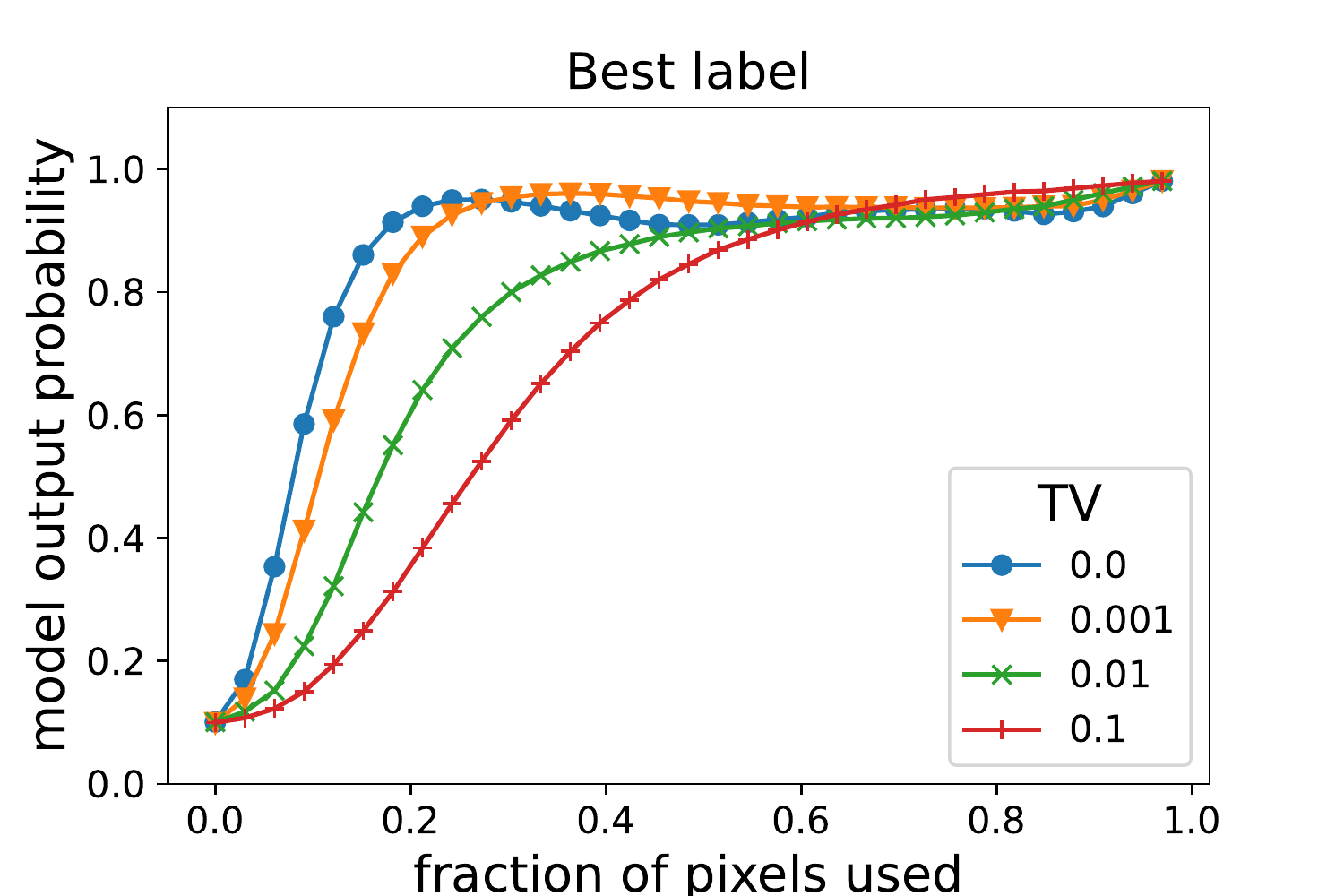}
\end{subfigure}%
\begin{subfigure}[b]{.5\textwidth}
	\centering
	\includegraphics[scale=0.4]{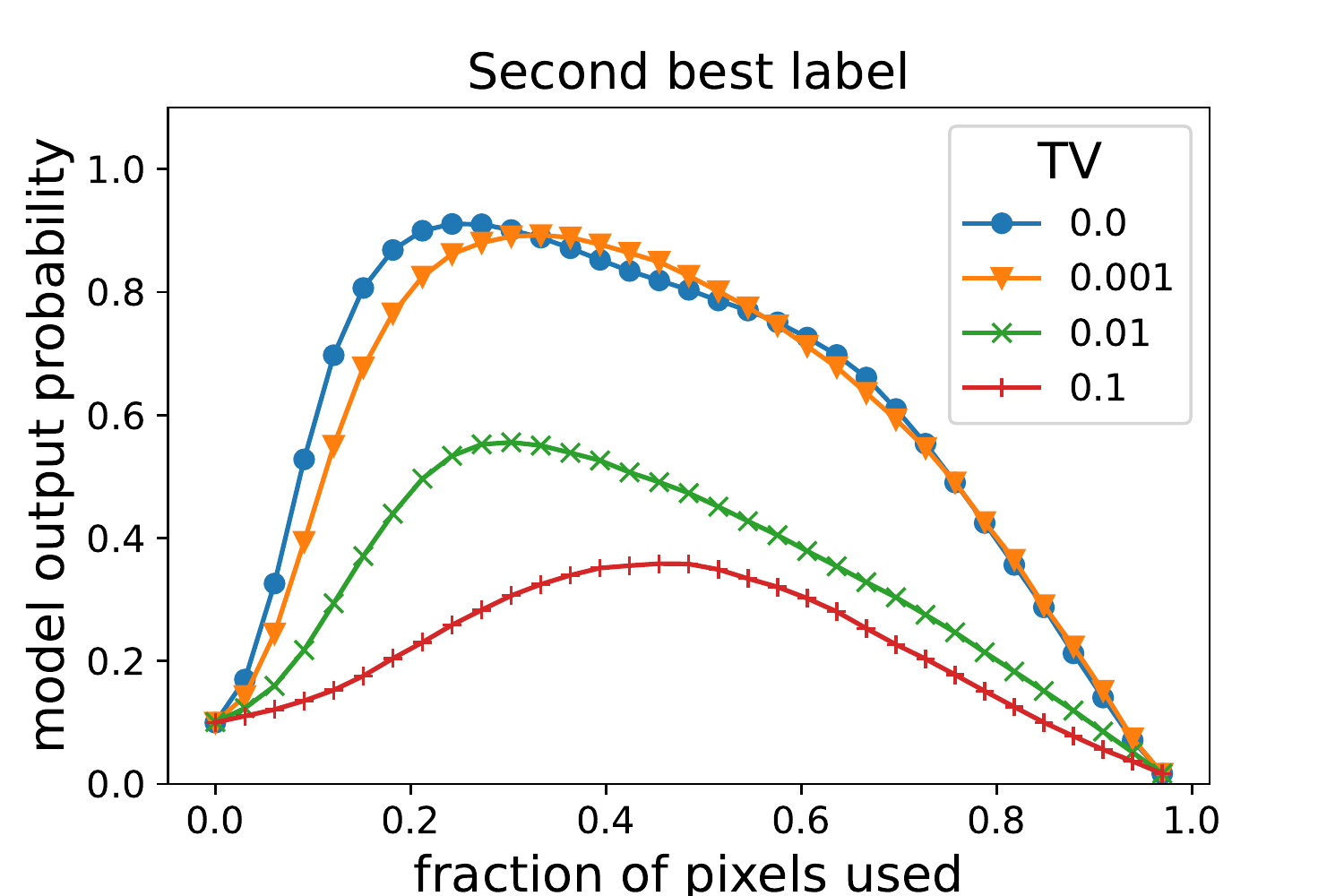}
\end{subfigure}
\caption{Plot of model output probability as more pixels from the original image are retained using learned masks.
The remaining pixels are replaced with gray. Different curves correspond to different values of TV regularization ($\lambda_{TV}$). Larger Area-Under-the-Curve (AUC) for the left figure (best label) suggests good completeness, while lower AUC for the right figure suggests good soundness.
Plots suggest that adding TV significantly helps with soundness, while only slightly hurting completeness.
}
	\label{fig:prob_curves}	
\end{figure}

Past justifications for TV regularization in saliency methods focused on its ability to make heatmaps in images that look more natural to humans. But that is not an intrinsic justification. In the intrinsic framework proposed in this paper, experiments  in \Cref{sec:expers} show TV regularization helps improve soundness of the saliency method. 
The current section reinforces this intrinsic benefit of TV by sketching a simple example (details in \Cref{app:linearcase})  showing how  TV helps even in 
 a simple linear model $\vx \mapsto \vw^\top \vx$ on non-image
input $\vx \in \R^d$ with binary label $y \in \{\pm 1\}$.

Suppose that the classifier has margin at least $\gamma > 0$, that is,
$y \vw^\top \vx = \sum_{i=1}^{d} y w_i x_i \ge \gamma$.
We focus on saliency methods that can return a binary heatmap $\map \in \{0, 1\}^{d}$ to certify the label,
where the coordinates marked as $1$ constitute a set $S$ of saliency coordinates.
Let $\Gamma(x, \map)$ be the input modification process that assigns $0$ to all input coordinates outside the corresponding saliency set $S$.

\looseness-1 
Consider an input $x$ such that $\vw^{\top} \vx > 0$ and $y=1$.
Firstly note that any set $S$ of coordinates that contribute positively to $\vw^\top\vx$,
i.e., the {\textbasemetric} $g(x, y, \map) := \onec{\sum_{i \in S} w_i x_i > 0}$ is $1$, will be able to certify the label $y=1$ and thus would satisfy {\em completeness}.
However this might not be convincing to a user, since it is also possible to come up with a set $S'$ that certifies the label $y=-1$ by picking only negative coordinates of $x$.
This would not satisfy {\em soundness} as the original model only predicts $y=1$.

This issue of soundness can be solved by adding a regularization or constraint on the set of allowable maps, in a way that ensures that it is easy to find a mask $\map$ for the correct label $y$, but not for $-y$.
Ideally, the constraints should be task-specific, but for linear classification with no special structure, even a TV constraint after random permutation suffices.
We consider the constraint that salient sets $S$ should be an interval; this corresponds to $\mathrm{TV}(\map) \le 2$.
The following result shows that if we further constrain the length of intervals to be $\Theta(\frac{1}{\gamma^2} \log d)$, then with high probability, we can find an interval certifying the model prediction $y$ but we are unable to do the same for $-y$
(Details in \Cref{app:linearcase}.) 
\begin{restatable}{theorem}{thmlinear} \label{thm:int1}
	For $(\vx, y) \in \R^d \times \{\pm 1\}$ with $\normtwosm{\vx}=1$, after random shuffling of the coordinates, the following holds for any $L_1 = \Omega(\frac{1}{\gamma^2} \log \frac{1}{\delta})$,
	$L_2 = \Omega(\frac{1}{\gamma^2} \log \frac{d}{\delta})$:
	\begin{enumerate}
		\item (Completeness) With probability $1 - \delta$, there is an interval $\map$ of length $L_1$ s.t.~$g(x, y, \map) = 1$;
		\item (Soundness)  With probability $1 - \delta$, $g(x, -y, \map) = 0$ holds for all intervals $\map$ of length $L_2$.
	\end{enumerate}
\end{restatable}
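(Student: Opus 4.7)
The plan is to set $z_i := y\, w_i x_i$, so the margin condition reads $\sum_{i=1}^{d} z_i = y\,\vw^\top\vx \ge \gamma$. Assuming the standard normalization $\|\vw\|_2 \le 1$ (in addition to the given $\|\vx\|_2 = 1$), Cauchy-Schwarz yields the two basic inequalities $\sum_i |z_i| \le 1$ and $\sum_i z_i^2 \le 1$, which together control the variance of any random subsample of the coordinates. Both halves of the theorem then reduce to statements about contiguous sums of the shuffled sequence $z_{\pi(1)}, \dots, z_{\pi(d)}$: completeness asks that some length-$L_1$ window have a strictly positive sum, while soundness asks that every length-$L_2$ window have a non-negative sum.

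For completeness I would use a clean pigeonhole argument: partition $\{1, \dots, d\}$ into $\lceil d/L_1 \rceil$ nearly-disjoint blocks of length $L_1$; the block sums add up to at least $\sum_i z_i \ge \gamma$, so some block has sum at least $\gamma L_1 / (2d) > 0$ and certifies $g(x, y, \map) = 1$. This works for every realization of $\pi$, so the stated $1 - \delta$ probability is attained trivially for any $L_1 \ge 1$. The explicitly probabilistic scaling $L_1 = \Omega(\gamma^{-2} \log(1/\delta))$ claimed in the theorem then emerges more naturally from an alternative argument: apply a Bernstein-type bound for sampling without replacement to a single fixed window $[1, L_1]$, whose mean is at least $L_1 \gamma/d$ and whose variance is at most of order $L_1/d$.

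Soundness is the substantive half and is where the random shuffle genuinely contributes. For each fixed starting position $a$, $S_a := \sum_{j=a}^{a+L_2-1} z_{\pi(j)}$ is distributed as a sum of $L_2$ samples without replacement from $\{z_1, \dots, z_d\}$, so $\E[S_a] \ge L_2 \gamma / d$. A Hoeffding-Serfling or Bernstein tail inequality yields $\Pr[S_a < 0] = \Pr[S_a - \E[S_a] \le -L_2 \gamma/d] \le \exp\bigl(-c\, L_2 \gamma^2 / (d\kappa)\bigr)$, where $\kappa$ is a per-coordinate scale parameter governed by $\|z\|_\infty$ and/or $\sum_i z_i^2$. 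A union bound over the at most $d$ starting positions then delivers the claimed $1-\delta$ guarantee once $L_2 = \Omega(\gamma^{-2} \log(d/\delta))$; the extra $\log d$ factor of $L_2$ relative to $L_1$ is precisely the union-bound cost of quantifying over all intervals rather than just one.

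The main technical obstacle is calibrating the concentration step so that the final bound on $L_2$ carries no polynomial-in-$d$ prefactor beyond the stated $\log d$. A naive Hoeffding bound using only $|z_i| \le 1$ leaves a stray $d^2$ factor, so I would instead use Bernstein with the sharper variance estimate coming from $\sum_i z_i^2 \le 1$ (which bounds the single-draw variance by $1/d$), and carefully compare the drift $L_2 \gamma / d$ of $S_a$ to its fluctuation of order $\sqrt{L_2/d}$. Cleanly showing that the drift dominates the fluctuation at exactly the stated length scale, rather than at a scale that grows with $d$, is the step where the real work lies.
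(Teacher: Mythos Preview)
Your high-level strategy for soundness---concentration for a single fixed window followed by a union bound over the $O(d)$ starting positions---is exactly what the paper does. The gap is the concentration step itself, and it is precisely the obstacle you flag in your last paragraph. Your own heuristic already exposes it: with only the $\ell_2$ information $\sum_i z_i^2\le 1$, the drift $L_2\gamma/d$ balances the fluctuation $\sqrt{L_2/d}$ at $L_2\asymp d/\gamma^2$, not $1/\gamma^2$, and Bernstein with per-draw variance $\le 1/d$ gives the same answer. This is not a matter of sharpening the inequality. Take $z_1=(1+\gamma)/2$ and $z_2=\cdots=z_d=-(1-\gamma)/(2(d-1))$, which is realizable as $z_i=w_ix_i$ with $\|w\|_2=\|x\|_2=1$ (set $w_1=x_1=\sqrt{(1+\gamma)/2}$ and $w_i=-x_i=\sqrt{\epsilon}$ for $i\ge 2$). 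Then $\sum_i z_i=\gamma$ and $\sum_i|z_i|=1$, yet after any shuffle every length-$L$ interval that misses the single large coordinate has strictly negative sum; for $L\le d/2$ such an interval always exists, so Item~2 is simply false at the claimed scale under your hypotheses.

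What the paper actually uses (stated as a standing assumption in the appendix, not in the theorem) is the coordinate bound $\|w\|_\infty,\|x\|_\infty\le 10/\sqrt{d}$. This makes the rescaled variables $d z_i=d\,y w_ix_i$ uniformly $O(1)$-bounded with population mean $\sum_i z_i\ge\gamma$, and then a plain Hoeffding bound for sampling without replacement yields $\Pr\bigl[\tfrac{1}{L}\sum_{i\in S} d z_i\le 0\bigr]\le\exp(-\Omega(L\gamma^2))$ for any fixed interval $S$ of length $L$. The union bound over intervals then contributes only the $\log d$. So the ``real work'' you anticipate is not a sharper inequality but an additional hypothesis; once it is in place, soundness is a two-line calculation.

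Your pigeonhole argument for completeness is correct and is actually cleaner than the paper's: the paper establishes Item~1 by applying the same Hoeffding bound to one fixed interval (this is where the stated scaling $L_1=\Omega(\gamma^{-2}\log(1/\delta))$ comes from), whereas your block argument gives a length-$L_1$ interval with positive sum for \emph{every} realization of the shuffle and every $L_1\ge 1$.
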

Although this example is simple, the conceptual message is clear: 
saliency methods may not guarantee soundness in itself,
but adding regularity constraints such as TV can improve soundness.


    

\begin{figure}
    \centering
    \includegraphics[width=\textwidth]{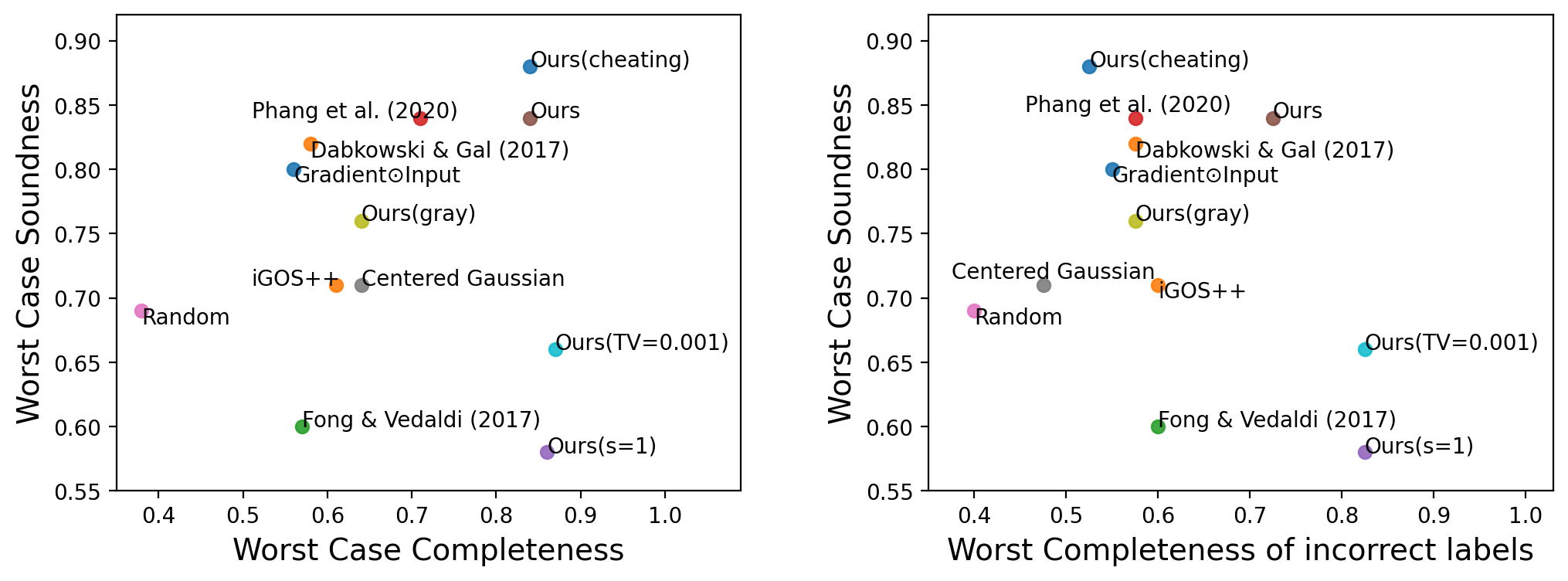}
    \caption{Worst case completeness v.s. soundness plot on Imagenette for different methods. (Upper right corner is best.) 
    Each point represent either a prior method \citep{phang2020investigating,dabkowski2017real,fong2017interpretable,shrikumar2017learning} or our method with different settings. By default, our methods use upsampling $s=4$, TV penalty $\lambda_{TV}=0.001$ and use other images for composite image (see \Cref{eq:composite}). ``Ours'' uses the default setting; ``Ours($s=1$)'' uses upsampling $s=1$; ``Ours($TV=0.001$)'' uses TV penalty factor $\lambda_{TV}=0.001$; ``Ours(gray)'' fills gray pixels in modifications procedures $\Gamma$; and ``Ours(cheating)'' uses masks generated for the correct label as masks for all labels.
    The $x$-axis of the right figure is the average worst completeness of all incorrect labels over samples where the second best label has model probability at least $0.01$. See interpretations of the plot in 
    \Cref{subsec:comp}.}
    \label{fig:scatter_completeness_soundness}
\end{figure}

	\section{Experiments}
	\label{sec:expers}

	 \looseness-1 In this section, we aim to 1) show TV penalty and upsampling have intrinsic justification that they significantly improve soundness at a small cost in completeness; 2) evaluate and compare prior saliency methods, and our method with different parameter settings, on our completeness and soundness metrics.
	 Experiments are performed either on CIFAR-10 or Imagenette \citep{howard2020imagenette}, a 10-class subset of ImageNet \footnote{Completeness and soundness evaluation on all labels can get computationally expensive  on ImageNet as it contains 1000 classes. This cost is only relevant when designing the method; not at deployment time.}. 
	 We compute maps and evaluate metrics on 1000 randomly drawn images from the original test set.
	 Additional experiments on various models and datasets (including ImageNet and CIFAR-100) appear in Appendix \ref{sec:additionalexpers}, as do 
	details of training procedure (beyond those in \Cref{sec:procedures}).
	For all our completeness and soundness evaluations, we use thresholds of $\epsilon_{1} = 0.01$ and $\epsilon_{2} = 0.001$ (see \Cref{eq:cs}).

\subsection{Effect of TV penalty and upsampling}
\label{sec:effect_TV}

In this section, we study the effect of ``tricks'' like TV penalty and upsampling on the learned maps and evaluation metrics.
Figure \ref{fig:prob_curves} shows how soundness and completeness of our masks changes with TV regularization. It plots the model output probability for CIFAR-10 as more pixels from the original image are retained using the mask. Our high level finding is that adding TV penalty significantly aids soundness, while only slightly hurting completeness. Exact evaluation of soundness and completeness for different $\lambda_{TV}$ are listed in \Cref{tab:tv_consistency_cifar10} in \Cref{sec:additionalimagenette}.
Relevant points of \Cref{fig:scatter_completeness_soundness} (which also contain other saliency methods) show effect of upsampling and TV penalty on completeness and soundness. We compare our method from \Cref{sec:procedures} with different settings on Imagenette.
Our method by default use $s=4$ and $\lambda_{TV}=0.01$.
By comparing ``Ours'' against ``Ours($s=1$)'' and ``Ours($TV=0.001$)'', we observe that upsampling and TV penalty leads to slightly worse completeness but significantly better soundness. The effect of upsampling on previous saliency evaluation metric may be seen in the relevant entries of \Cref{tab:imagenette_main}, where it leads to slightly worse performance on the insertion, deletion, and saliency metrics.

    \subsection{Comparison to existing metrics and methods}
    \label{subsec:comp}
	
    \looseness-1 We compared our procedure to other methods including Gradient $\odot$ Input \citep{shrikumar2017learning}, iGOS++\citep{khorram2021igos++}, \citet{dabkowski2017real}  \citet{fong2017interpretable} and \citet{phang2020investigating} on Imagenette. 
    We also include simple baselines like ``Random'' map (each entry is a random Gaussian), and ``Centered Gaussian'' map (2-dimensional isotropic Gaussian distribution placed at the image center).
    Besides completeness and soundness metrics, we evaluate methods on the Deletion Game and Insertion Game metrics (\url{https://github.com/eclique/RISE}), and
	Saliency metric (SM) \citep{dabkowski2017real}.
	Detailed description of the different metrics can be found in the Appendix \ref{sec:additionalbackground}. 
	
	\looseness-1 For our method, we learn a map by optimizing the objective function in \Cref{eq:main_obj} with $\lambda_{TV}\in\{0.01,0.001\}$ and $s \in \{1, 4\}$; default settings are $\lambda_{TV}=0.01, s=4$.
	We also evaluate our method when other pixels are replaced with gray rather than a random image, denoted by ``Ours (gray)''.
    For uniformity, we use the identical ResNet50 pretrained on ImageNet as the base classifier for every saliency method. 
	We normalize the maps so that all values lie in $[0,1]$ before use.

    \begin{table*}[t]
        \centering
        \caption{\looseness-1 Saliency methods evaluate on prior metrics, worst case completeness and soundness on Imagenette. Insertion calculated with grey infilling. The best two of each column are marked bold. By default, our methods use upsampling $s=4$, TV penalty factor $\lambda_{TV}=0.01$ and use randomly sampled image in the modification procedures $\Gamma$. Difference from default setting is noted in parentheses. $\downarrow$~indicates lower is better. $^*$Strong performance of is likely due to centrality bias (cf. \Cref{sec:additionalimagenette}).}
        \label{tab:imagenette_main}
        \resizebox{0.8\textwidth}{!}{
        \begin{tabular}{|c|c c c|c c|}
        \hline
             & Deletion $\downarrow$ & Insertion  & Saliency  & Worst Case  & Worst Case   \\
             &                       & (gray)     $\uparrow$            & Metric $\downarrow$  & Completeness $\uparrow$      &  Soundness $\uparrow$  \\
        \hline
        Gradient $\odot$ Input\cite{shrikumar2017learning} & 0.42 & 0.56 & $-$0.35 & 0.56 & 0.80\\
        iGOS++\cite{khorram2021igos++} & \bf 0.37 &  0.68 & $-$0.36  & 0.61 & 0.71  \\
        \citet{dabkowski2017real} & 0.48 & 0.66 & { $-$0.85} & 0.58 & 0.82\\
        \citet{fong2017interpretable} & 0.58 & 0.59 &$-$0.40 & 0.57 & 0.60  \\
        \citet{phang2020investigating} &  \bf 0.41 & 0.75 & $-$0.27  & 0.71 & \bf 0.84 \\
        \hline
        Ours & 0.50 & 0.74 & { $-$0.90} & 0.84 & \bf 0.84  \\
        Ours ($s=1$) & 0.49 &{\bf 0.77} & {\bf $-$1.18} & \bf 0.86 & 0.58 \\
        Ours ($\lambda_{TV}=0.001$) & 0.45  & \bf 0.80 & \bf $-$1.01 & \bf 0.87 & 0.65 \\
        Ours (gray infilling) & 0.46 & 0.65 & $-$0.35 & 0.64 & 0.76  \\
        \hline
        Random & 0.45 & 0.45 & $-$0.35 & 0.38 & 0.69 \\
        \revise{Centered Gaussian} & 0.66 & { 0.74}$^*$ &  { $-$0.97}$^*$ & 0.64 & 0.71 \\
        \hline
        \end{tabular}
        }
    \end{table*}

\myparagraph{Completeness and soundness.}
We compare different methods on the metrics from \Cref{def:worst_cs}, through the visualization in  \Cref{fig:scatter_completeness_soundness} (left) and in \Cref{tab:imagenette_main}.
We find that our method achieves better completeness and good soundness compared to other methods and baselines.
We posit that completeness is better due to our improved composite input strategy from \Cref{eq:composite}, where non-salient pixels are filled with another image pixels as opposed to other strategies.
As an ablation, we see that ``Ours (gray)'' which uses gray replacement does not perform as well.

\looseness-1On the other hand, our soundness is good only for higher TV regularization $\lambda_{TV}=0.01$ and higher upsampling $s=4$, as discussed in \Cref{sec:effect_TV}.
Soundness is very bad (but completeness slightly better) if these ``tricks'' are not used.
We note that while \citet{phang2020investigating} does as well as our method on soundness, it does so via ``cheating'' in an interesting way: they compute mask for the top label and return the same mask for all labels.
Since the method does not even try to generate masks for labels with low probability, it achieves good soundness for free.
To delve deeper, we employ the same ``cheating'' with our method --- return the same mask for all labels --- and observe a large bump in soundness, without any drop in completeness.
While ``cheating'' is always possible (even if unintentionally) for any metric, we hope that our results inspire saliency method designers to compute maps for all labels.
In this case we find, in \Cref{fig:scatter_completeness_soundness} (right), that looking at worst case completeness of incorrect labels (all except model prediction) gives a measure of the ``effort'' a method employs in generating masks for all labels with probability at least 0.01.
As evident methods that ``cheat'', like \citet{phang2020investigating} and ``Ours (cheating)'', achieve a bad ``best effort'' score.

\myparagraph{Other metrics.}
\Cref{tab:imagenette_main} suggests that our method can achieve comparable performances on most existing intrinsic metrics.
This suggests that good completeness and soundness scores are achievable without compromising on other measures.
Deletion metric is one where our method falls short, likely due to our method not employing an SDR objective as in \citet{dabkowski2017real}.
Although we note that ``Random'' baseline performs quite well on the Deletion metric, which suggests that deletion metric does not provide much signal on these datasets.
Further investigation on the good performance of ``Centered Gaussian'' baseline can be found in \Cref{sec:additionalimagenette}.

\looseness-1 Our settings that works best on other metrics are those without upsampling ($s=1$) or lower TV ($\lambda_{TV}=0.001$).
Soundness is the only metric that strongly justifies the upsampling and TV ``tricks'', thus verifying its utility.
Our method with $s=4,\lambda_{TV}=0.01$ demonstrates that good performance can be obtained on all metrics simultaneously, with only little price to pay.

\myparagraph{Limitations of our work.}
Completeness and soundness do not involve or address human interpretability of explanations. This can be a strength when considering intrinsic evaluations, but is a weakness when considering extrinsic evaluations. Additionally, though completeness and soundness should be taken into account while designing a saliency method, they are not meant as a replacement  for other evaluation methods, which can provide additional information on saliency map quality. For example, the axioms and sanity checks for saliency discussed in Appendix \ref{subsec:missing} should also be applied. Finally, at deployment time, there is a risk that these evaluation methods may be used as a ``proof'' to justify a saliency method is safe to be used in high-risk applications. Completeness and soundness should be taken as a sanity check rather than a proof a saliency method is safe for deployment. 

\vspace{-0.05in}
\section{Conclusions}
\label{sec:conclusions}
\vspace{-0.05in}
Saliency explanations of ML models have proved nebulous and generated much discussion. 
By avoiding extrinsic considerations (e.g., human interpretability) and sticking to intrinsic notions such as completeness and soundness, this paper has tried to give a rigorous notion of saliency  that is intrinsic to the deep net, while avoiding the noisiness in earlier intrinsic ideas that motivated methods like Gradient$\odot$Input and Shapley Values.
Other new contributions include clarifying in the intrinsic view the role of TV regularization and other tricks (it hurts completeness slightly but greatly improves soundness);  a simple saliency method (\Cref{sec:procedures}) for producing mask-based explanations that was designed solely with intrinsic considerations and yet has performance competitive with good existing methods, and sometimes better (\Cref{tab:imagenette_main}). 
Our experiments suggest soundness provides a new dimension to evaluate saliency methods on, and that good performance on it can be achieved without compromising on other metrics.
Note that soundness needs to be considered only while designing and evaluating the method, not at deployment. 

\looseness-1 Evaluating soundness requires looking at maps for all labels instead of just the top label, and thus seems relevant for object {\em localization} in images, and for classification in the wild --- where multiple objects appear in an image.
This benefit is hinted in our evaluation on a prior small dataset (see \Cref{sec:elezeb}) and deserves further exploration.
It may help with certain distribution shifts as well.
While this work studies soundness in the context of mask-based saliency methods and evaluations, the concept of soundness could be applicable for the idea of saliency and interpretability in more generality.

\myparagraph{Acknowledgements.}
We thank Ruth Fong for feedback on an earlier draft of this paper.
We are also grateful to the valuable comments from various anonymous reviewers that helped improve the paper.
This work is supported by funding from NSF, ONR, Simons Foundation, DARPA and SRC.




\bibliography{main}

\begin{thebibliography}{49}
\providecommand{\natexlab}[1]{#1}
\providecommand{\url}[1]{\texttt{#1}}
\expandafter\ifx\csname urlstyle\endcsname\relax
  \providecommand{\doi}[1]{doi: #1}\else
  \providecommand{\doi}{doi: \begingroup \urlstyle{rm}\Url}\fi

\bibitem[Adebayo et~al.(2018)Adebayo, Gilmer, Muelly, Goodfellow, Hardt, and
  Kim]{adebayo2018sanity}
Julius Adebayo, Justin Gilmer, Michael Muelly, Ian Goodfellow, Moritz Hardt,
  and Been Kim.
\newblock Sanity checks for saliency maps.
\newblock \emph{Advances in Neural Information Processing Systems},
  31:\penalty0 9505--9515, 2018.

\bibitem[Agarwal and Nguyen(2020)]{agarwal2020explaining}
Chirag Agarwal and Anh Nguyen.
\newblock Explaining image classifiers by removing input features using
  generative models.
\newblock In \emph{Proceedings of the Asian Conference on Computer Vision},
  2020.

\bibitem[Baehrens et~al.(2010)Baehrens, Schroeter, Harmeling, Kawanabe, Hansen,
  and M{\"u}ller]{baehrens2010explain}
David Baehrens, Timon Schroeter, Stefan Harmeling, Motoaki Kawanabe, Katja
  Hansen, and Klaus-Robert M{\"u}ller.
\newblock How to explain individual classification decisions.
\newblock \emph{The Journal of Machine Learning Research}, 11:\penalty0
  1803--1831, 2010.

\bibitem[Beyer et~al.(2020)Beyer, H{\'e}naff, Kolesnikov, Zhai, and van~den
  Oord]{beyer2020we}
Lucas Beyer, Olivier~J H{\'e}naff, Alexander Kolesnikov, Xiaohua Zhai, and
  A{\"a}ron van~den Oord.
\newblock Are we done with imagenet?
\newblock \emph{arXiv e-prints}, pages arXiv--2006, 2020.

\bibitem[Binder et~al.(2016)Binder, Montavon, Lapuschkin, M{\"u}ller, and
  Samek]{binder2016layer}
Alexander Binder, Gr{\'e}goire Montavon, Sebastian Lapuschkin, Klaus-Robert
  M{\"u}ller, and Wojciech Samek.
\newblock Layer-wise relevance propagation for neural networks with local
  renormalization layers.
\newblock In \emph{International Conference on Artificial Neural Networks},
  pages 63--71. Springer, 2016.

\bibitem[Brunke et~al.(2020)Brunke, Agrawal, and George]{brunke2020evaluating}
Lukas Brunke, Prateek Agrawal, and Nikhil George.
\newblock Evaluating input perturbation methods for interpreting cnns and
  saliency map comparison.
\newblock In \emph{European Conference on Computer Vision}, pages 120--134.
  Springer, 2020.

\bibitem[Carter et~al.(2019)Carter, Mueller, Jain, and Gifford]{carter2019made}
Brandon Carter, Jonas Mueller, Siddhartha Jain, and David Gifford.
\newblock What made you do this? understanding black-box decisions with
  sufficient input subsets.
\newblock pages 567--576, 2019.

\bibitem[Chang et~al.(2018)Chang, Creager, Goldenberg, and
  Duvenaud]{chang2018explaining}
Chun-Hao Chang, Elliot Creager, Anna Goldenberg, and David Duvenaud.
\newblock Explaining image classifiers by counterfactual generation.
\newblock 2018.

\bibitem[Dabkowski and Gal(2017)]{dabkowski2017real}
Piotr Dabkowski and Yarin Gal.
\newblock Real time image saliency for black box classifiers.
\newblock In \emph{Proceedings of the 31st International Conference on Neural
  Information Processing Systems}, pages 6970--6979, 2017.

\bibitem[De~Cao et~al.(2020)De~Cao, Schlichtkrull, Aziz, and
  Titov]{de2020decisions}
Nicola De~Cao, Michael~Sejr Schlichtkrull, Wilker Aziz, and Ivan Titov.
\newblock How do decisions emerge across layers in neural models?
  interpretation with differentiable masking.
\newblock pages 3243--3255, 2020.

\bibitem[Fong and Vedaldi(2017)]{fong2017interpretable}
Ruth~C Fong and Andrea Vedaldi.
\newblock Interpretable explanations of black boxes by meaningful perturbation.
\newblock In \emph{Proceedings of the IEEE International Conference on Computer
  Vision}, pages 3429--3437, 2017.

\bibitem[Fryer et~al.(2021)Fryer, Str{\"u}mke, and Nguyen]{fryer2021shapley}
Daniel Fryer, Inga Str{\"u}mke, and Hien Nguyen.
\newblock Shapley values for feature selection: The good, the bad, and the
  axioms.
\newblock \emph{ICLR}, 2021.

\bibitem[Gu et~al.(2018)Gu, Yang, and Tresp]{gu2018understanding}
Jindong Gu, Yinchong Yang, and Volker Tresp.
\newblock Understanding individual decisions of cnns via contrastive
  backpropagation.
\newblock In \emph{Asian Conference on Computer Vision}, pages 119--134.
  Springer, 2018.

\bibitem[Hong et~al.(2019)Hong, Xiong, Ji, and Fan]{hong2019deep}
Xin Hong, Pengfei Xiong, Renhe Ji, and Haoqiang Fan.
\newblock Deep fusion network for image completion.
\newblock pages 2033--2042, 2019.

\bibitem[Hooker et~al.(2019)Hooker, Erhan, Kindermans, and
  Kim]{hooker2019benchmark}
Sara Hooker, Dumitru Erhan, Pieter-Jan Kindermans, and Been Kim.
\newblock A benchmark for interpretability methods in deep neural networks.
\newblock In \emph{NeurIPS}, 2019.

\bibitem[Howard(2020)]{howard2020imagenette}
Jeremy Howard.
\newblock imagenette, 2019.
\newblock \emph{URL https://github. com/fastai/imagenette}, 2020.

\bibitem[Ignatiev et~al.(2019{\natexlab{a}})Ignatiev, Narodytska, and
  Marques-Silva]{ignatiev2019abduction}
Alexey Ignatiev, Nina Narodytska, and Joao Marques-Silva.
\newblock Abduction-based explanations for machine learning models.
\newblock In \emph{Proceedings of the AAAI Conference on Artificial
  Intelligence}, volume~33, pages 1511--1519, 2019{\natexlab{a}}.

\bibitem[Ignatiev et~al.(2019{\natexlab{b}})Ignatiev, Narodytska, and
  Marques-Silva]{ignatiev2019relating}
Alexey Ignatiev, Nina Narodytska, and Joao Marques-Silva.
\newblock On relating explanations and adversarial examples.
\newblock 2019{\natexlab{b}}.

\bibitem[Khakzar et~al.(2019)Khakzar, Baselizadeh, Khanduja, Rupprecht, Kim,
  and Navab]{khakzar2019improving}
Ashkan Khakzar, Soroosh Baselizadeh, Saurabh Khanduja, Christian Rupprecht,
  Seong~Tae Kim, and Nassir Navab.
\newblock Improving feature attribution through input-specific network pruning.
\newblock \emph{arXiv e-prints}, pages arXiv--1911, 2019.

\bibitem[Khorram et~al.(2021)Khorram, Lawson, and Fuxin]{khorram2021igos++}
Saeed Khorram, Tyler Lawson, and Li~Fuxin.
\newblock igos++ integrated gradient optimized saliency by bilateral
  perturbations.
\newblock \emph{Proceedings of the Conference on Health, Inference, and
  Learning}, pages 174--182, 2021.

\bibitem[Kim et~al.(2019)Kim, Seo, Jeon, Koo, Choe, and Jeon]{kim2019saliency}
Beomsu Kim, Junghoon Seo, Seunghyeon Jeon, Jamyoung Koo, Jeongyeol Choe, and
  Taegyun Jeon.
\newblock Why are saliency maps noisy? cause of and solution to noisy saliency
  maps.
\newblock In \emph{2019 IEEE/CVF International Conference on Computer Vision
  Workshop (ICCVW)}, pages 4149--4157. IEEE, 2019.

\bibitem[Kim et~al.(2021)Kim, Kim, Kim, and Ro]{kim2021robust}
Junho Kim, Seongyeop Kim, Seong~Tae Kim, and Yong~Man Ro.
\newblock Robust perturbation for visual explanation: Cross-checking mask
  optimization to avoid class distortion.
\newblock \emph{IEEE Transactions on Image Processing}, 31:\penalty0 301--313,
  2021.

\bibitem[Kingma and Ba(2014)]{kingma2014adam}
Diederik~P Kingma and Jimmy Ba.
\newblock Adam: A method for stochastic optimization.
\newblock \emph{arXiv preprint arXiv:1412.6980}, 2014.

\bibitem[Kokhlikyan et~al.(2020)Kokhlikyan, Miglani, Martin, Wang, Alsallakh,
  Reynolds, Melnikov, Kliushkina, Araya, Yan, and
  Reblitz-Richardson]{kokhlikyan2020captum}
Narine Kokhlikyan, Vivek Miglani, Miguel Martin, Edward Wang, Bilal Alsallakh,
  Jonathan Reynolds, Alexander Melnikov, Natalia Kliushkina, Carlos Araya, Siqi
  Yan, and Orion Reblitz-Richardson.
\newblock Captum: A unified and generic model interpretability library for
  pytorch.
\newblock 2020.

\bibitem[Lin et~al.(2014)Lin, Maire, Belongie, Hays, Perona, Ramanan,
  Doll{\'a}r, and Zitnick]{lin2014microsoft}
Tsung-Yi Lin, Michael Maire, Serge Belongie, James Hays, Pietro Perona, Deva
  Ramanan, Piotr Doll{\'a}r, and C~Lawrence Zitnick.
\newblock Microsoft coco: Common objects in context.
\newblock In \emph{European conference on computer vision}, pages 740--755.
  Springer, 2014.

\bibitem[Lundberg and Lee(2017{\natexlab{a}})]{NIPS2017_8a20a862}
Scott~M Lundberg and Su-In Lee.
\newblock A unified approach to interpreting model predictions.
\newblock In I.~Guyon, U.~V. Luxburg, S.~Bengio, H.~Wallach, R.~Fergus,
  S.~Vishwanathan, and R.~Garnett, editors, \emph{Advances in Neural
  Information Processing Systems}. Curran Associates, Inc., 2017{\natexlab{a}}.
\newblock URL
  \url{https://proceedings.neurips.cc/paper/2017/file/8a20a8621978632d76c43dfd28b67767-Paper.pdf}.

\bibitem[Lundberg and Lee(2017{\natexlab{b}})]{lundberg2017unified}
Scott~M Lundberg and Su-In Lee.
\newblock A unified approach to interpreting model predictions.
\newblock pages 4768--4777, 2017{\natexlab{b}}.

\bibitem[Macdonald et~al.(2019)Macdonald, W{\"a}ldchen, Hauch, and
  Kutyniok]{macdonald2019rate}
Jan Macdonald, Stephan W{\"a}ldchen, Sascha Hauch, and Gitta Kutyniok.
\newblock A rate-distortion framework for explaining neural network decisions.
\newblock \emph{arXiv e-prints}, pages arXiv--1905, 2019.

\bibitem[Mu and Andreas(2020)]{mu2020compositional}
Jesse Mu and Jacob Andreas.
\newblock Compositional explanations of neurons.
\newblock 2020.

\bibitem[Petsiuk et~al.(2018)Petsiuk, Das, and Saenko]{petsiuk2018rise}
Vitali Petsiuk, Abir Das, and Kate Saenko.
\newblock Rise: Randomized input sampling for explanation of black-box models.
\newblock In \emph{Proceedings of the British Machine Vision Conference
  (BMVC)}, 2018.

\bibitem[Phang et~al.(2020)Phang, Park, and Geras]{phang2020investigating}
Jason Phang, Jungkyu Park, and Krzysztof~J Geras.
\newblock Investigating and simplifying masking-based saliency methods for
  model interpretability.
\newblock \emph{arXiv preprint arXiv:2010.09750}, 2020.

\bibitem[Rubin(1976)]{rubin1976inference}
Donald~B Rubin.
\newblock Inference and missing data.
\newblock \emph{Biometrika}, 63\penalty0 (3):\penalty0 581--592, 1976.

\bibitem[Samek et~al.(2016)Samek, Binder, Montavon, Lapuschkin, and
  M{\"u}ller]{samek2016evaluating}
Wojciech Samek, Alexander Binder, Gr{\'e}goire Montavon, Sebastian Lapuschkin,
  and Klaus-Robert M{\"u}ller.
\newblock Evaluating the visualization of what a deep neural network has
  learned.
\newblock \emph{IEEE transactions on neural networks and learning systems},
  28\penalty0 (11):\penalty0 2660--2673, 2016.

\bibitem[Samek et~al.(2019)Samek, Montavon, Vedaldi, Hansen, and
  M{\"u}ller]{samek2019explainable}
Wojciech Samek, Gr{\'e}goire Montavon, Andrea Vedaldi, Lars~Kai Hansen, and
  Klaus-Robert M{\"u}ller.
\newblock \emph{Explainable AI: interpreting, explaining and visualizing deep
  learning}, volume 11700.
\newblock Springer Nature, 2019.

\bibitem[Schulz et~al.(2019)Schulz, Sixt, Tombari, and
  Landgraf]{schulz2019restricting}
Karl Schulz, Leon Sixt, Federico Tombari, and Tim Landgraf.
\newblock Restricting the flow: Information bottlenecks for attribution.
\newblock 2019.

\bibitem[Selvaraju et~al.(2019)Selvaraju, Das, Vedantam, Cogswell, Parikh, and
  Batra]{selvaraju2016grad}
Ramprasaath~R Selvaraju, Abhishek Das, Ramakrishna Vedantam, Michael Cogswell,
  Devi Parikh, and Dhruv Batra.
\newblock Grad-cam: Why did you say that?
\newblock \emph{International Journal of Computer Vision (IJCV)}, 2019.

\bibitem[Seo et~al.(2018)Seo, Choe, Koo, Jeon, Kim, and Jeon]{seo2018noise}
Junghoon Seo, Jeongyeol Choe, Jamyoung Koo, Seunghyeon Jeon, Beomsu Kim, and
  Taegyun Jeon.
\newblock Noise-adding methods of saliency map as series of higher order
  partial derivative.
\newblock \emph{2018 ICML Workshop on Human Interpretability in Machine
  Learning}, pages arXiv--1806, 2018.

\bibitem[Shrikumar et~al.(2017)Shrikumar, Greenside, and
  Kundaje]{shrikumar2017learning}
Avanti Shrikumar, Peyton Greenside, and Anshul Kundaje.
\newblock Learning important features through propagating activation
  differences.
\newblock In \emph{International Conference on Machine Learning}, pages
  3145--3153. PMLR, 2017.

\bibitem[Smilkov et~al.(2017)Smilkov, Thorat, Kim, Vi{\'e}gas, and
  Wattenberg]{smilkov2017smoothgrad}
Daniel Smilkov, Nikhil Thorat, Been Kim, Fernanda Vi{\'e}gas, and Martin
  Wattenberg.
\newblock Smoothgrad: removing noise by adding noise.
\newblock \emph{arXiv e-prints}, pages arXiv--1706, 2017.

\bibitem[Sundararajan and Najmi(2020)]{sundararajan2020many}
Mukund Sundararajan and Amir Najmi.
\newblock The many shapley values for model explanation.
\newblock In \emph{International Conference on Machine Learning}, pages
  9269--9278. PMLR, 2020.

\bibitem[Sundararajan et~al.(2017)Sundararajan, Taly, and
  Yan]{sundararajan2017axiomatic}
Mukund Sundararajan, Ankur Taly, and Qiqi Yan.
\newblock Axiomatic attribution for deep networks.
\newblock pages 3319--3328, 2017.

\bibitem[Taghanaki et~al.(2019)Taghanaki, Havaei, Berthier, Dutil, Di~Jorio,
  Hamarneh, and Bengio]{taghanaki2019infomask}
Saeid~Asgari Taghanaki, Mohammad Havaei, Tess Berthier, Francis Dutil, Lisa
  Di~Jorio, Ghassan Hamarneh, and Yoshua Bengio.
\newblock Infomask: Masked variational latent representation to localize chest
  disease.
\newblock pages 739--747, 2019.

\bibitem[Tomsett et~al.(2020)Tomsett, Harborne, Chakraborty, Gurram, and
  Preece]{tomsett2020sanity}
Richard Tomsett, Dan Harborne, Supriyo Chakraborty, Prudhvi Gurram, and Alun
  Preece.
\newblock Sanity checks for saliency metrics.
\newblock 34\penalty0 (04):\penalty0 6021--6029, 2020.

\bibitem[Yang and Kim(2019)]{yang2019benchmarking}
Mengjiao Yang and Been Kim.
\newblock Benchmarking attribution methods with relative feature importance.
\newblock \emph{arXiv e-prints}, pages arXiv--1907, 2019.

\bibitem[Yeh et~al.(2020)Yeh, Kim, Arik, Li, Pfister, and
  Ravikumar]{yeh2020completeness}
Chih-Kuan Yeh, Been Kim, Sercan Arik, Chun-Liang Li, Tomas Pfister, and Pradeep
  Ravikumar.
\newblock On completeness-aware concept-based explanations in deep neural
  networks.
\newblock \emph{Advances in Neural Information Processing Systems}, 33, 2020.

\bibitem[Yu et~al.(2018)Yu, Lin, Yang, Shen, Lu, and Huang]{yu2018generative}
Jiahui Yu, Zhe Lin, Jimei Yang, Xiaohui Shen, Xin Lu, and Thomas~S Huang.
\newblock Generative image inpainting with contextual attention.
\newblock pages 5505--5514, 2018.

\bibitem[Zhang et~al.(2018)Zhang, Bargal, Lin, Brandt, Shen, and
  Sclaroff]{zhang2018top}
Jianming Zhang, Sarah~Adel Bargal, Zhe Lin, Jonathan Brandt, Xiaohui Shen, and
  Stan Sclaroff.
\newblock Top-down neural attention by excitation backprop.
\newblock \emph{International Journal of Computer Vision}, 126\penalty0
  (10):\penalty0 1084--1102, 2018.

\bibitem[Zhou et~al.(2017)Zhou, Lapedriza, Khosla, Oliva, and
  Torralba]{zhou2017places}
Bolei Zhou, Agata Lapedriza, Aditya Khosla, Aude Oliva, and Antonio Torralba.
\newblock Places: A 10 million image database for scene recognition.
\newblock \emph{IEEE transactions on pattern analysis and machine
  intelligence}, 40\penalty0 (6):\penalty0 1452--1464, 2017.

\bibitem[Zhou et~al.(2018)Zhou, Bau, Oliva, and Torralba]{zhou2018interpreting}
Bolei Zhou, David Bau, Aude Oliva, and Antonio Torralba.
\newblock Interpreting deep visual representations via network dissection.
\newblock \emph{IEEE transactions on pattern analysis and machine
  intelligence}, 41\penalty0 (9):\penalty0 2131--2145, 2018.

\end{thebibliography}

\newpage

\appendix

\clearpage


\section{Intuitions and Implementations of Procedures to Find Masking Explanations}\label{sec:procedures-details}

As introduced in \Cref{sec:methods_metrics}, for evaluation we may interest in random binary masks due to its connection to AUC, but in our method for finding masking explanations we only focus on deterministic masks.
Given a network $f$, image $x \in \R^{c\times hw}$ and class $a$, we wish to find a binary mask $\mask\in\{0,1\}^{hw}$ such that when the part of $x$ on $\mask$ is superimposed onto a ``distractor'' $\bar{x} \sim \DataX$ (randomly sampled from the train set) as $\tilde{x} = \mask  \odot x + (1-\mask ) \odot \bar{x}$, the output probability of the model $f(\tilde{x},a)$ is high for the class $a$.

As in \Cref{sec:completeness_and_soundness} we compute the average probability assigned to class $a$ over the sampling of the distractor $\bar{x}$, i.e. we are interested in making $\E_{\bar{x}\sim\DataX}[f(\tilde{x}, a)]$ high.
To avoid the hard problem of optimizing over the hypercube $\{0,1\}^{hw}$, a typical strategy (also employed in prior work) is to relax the domain of $\mask $ to be $[0,1]^{hw}$. Since we do not wish to learn masks of very large size, a $\ell_1$ norm penalty on $\mask $ (corresponding to size of the mask), leading to the following natural objective function\footnote{A standard way to maximize probability is to minimize the negative log probability}
\begin{align}
    L(\mask ) = \E_{\bar{x}\sim\DataX} \left[-\log(f(\mask  \odot x + (1-\mask ) \odot \bar{x}, a ))\right] + \lambda_1 \|\mask \|_1
\end{align}

However most masking-based methods employ additional ``tricks'' in order to avoid ``artifacts'' in the produced saliency maps, like Total-Variation (TV) penalty \citep{fong2017interpretable} and upsampling of the mask from a lower resolution one \citep{petsiuk2018rise}.
We also employ the same strategy by learning a low-resolution mask at scale $s$, $\mask \in\R^{hw/s^2}$, to minimize the following
\begin{align}
    L(\mask) = \E_{\bar{x}\sim\DataX} \left[-\log(f(\mask^{\times s} \odot x + (1-\mask^{\times s}) \odot \bar{x}, a))\right] + \lambda_{TV} TV(\mask^{\times s}) + \lambda_1 \|\mask^{\times s}\|_1
\end{align}
where $\mask^{\times s}\in\R^{hw}$ is obtained by upsampling $\mask$ by a factor of $s\in\{1,4\}$ via bilinear interpolation.

While the motivation cited for these ``trick'' is to avoid artifacts, it is not clear whether artifacts are a bad thing, since they might be relevant to the net's decision-making. Indeed, we show that while TV penalty or upsampling does produce better looking masks, they lead to a drop in the {\em  completeness metric}. 
However we show that adding such tricks leads to significant improvement in the {\em soundness metric,} thus providing a novel justification for the use of such tricks, beyond just the heuristic argument of getting rid of artifacts. 
In \Cref{sec:linear} we also provide theoretical justification for why TV penalty can help with soundness, even for the simple case of linear predictors on non-image data.

\looseness-1 We optimize the objective in \Cref{eq:main_obj} by parametrizing $\mask$ as a sigmoid of real valued weights $W\in\R^{hw/s^2}$, i.e. $\mask=\sigma(W)$, and run Adam \citep{kingma2014adam} optimizer for 2000 steps with learning rate $0.05$ and by sampling 10 distractor images at every step, for different values of $\lambda_{TV}$ and upsampling factor $s$.

\section{Practical Benefits of Completeness of all labels for Images of Multiple Objects}\label{sec:elezeb}

Images may have multiple plausible labels. \Cref{fig:elezeb} and \Cref{fig:2label} show images where the classifier net gave high probability to  a single label even though multiple objects were present. Our saliency method can produce different and meaningful masks for all labels that are valid. Extending the notion of soundness for multi-label settings is an open question.

In \Cref{fig:elezeb},  images that previously used in \citealt{gu2018understanding} can have both elephants and zebras present, but it may not be always clear from the model output if there is such a case, since the model can be much more confident on one label, e.g., elephant, than one would expect it to be. For this reason, finding masking explanations validating other labels, e.g., zebra, could provide more information on how the model makes the prediction.

We also use the relabeling provided by \citet{beyer2020we} to select ImageNet validation images with two true labels. Our results may be seen in Figure \ref{fig:2label}.

\begin{figure}[htbp] 
	\centering
	\includegraphics[width=0.45\textwidth]{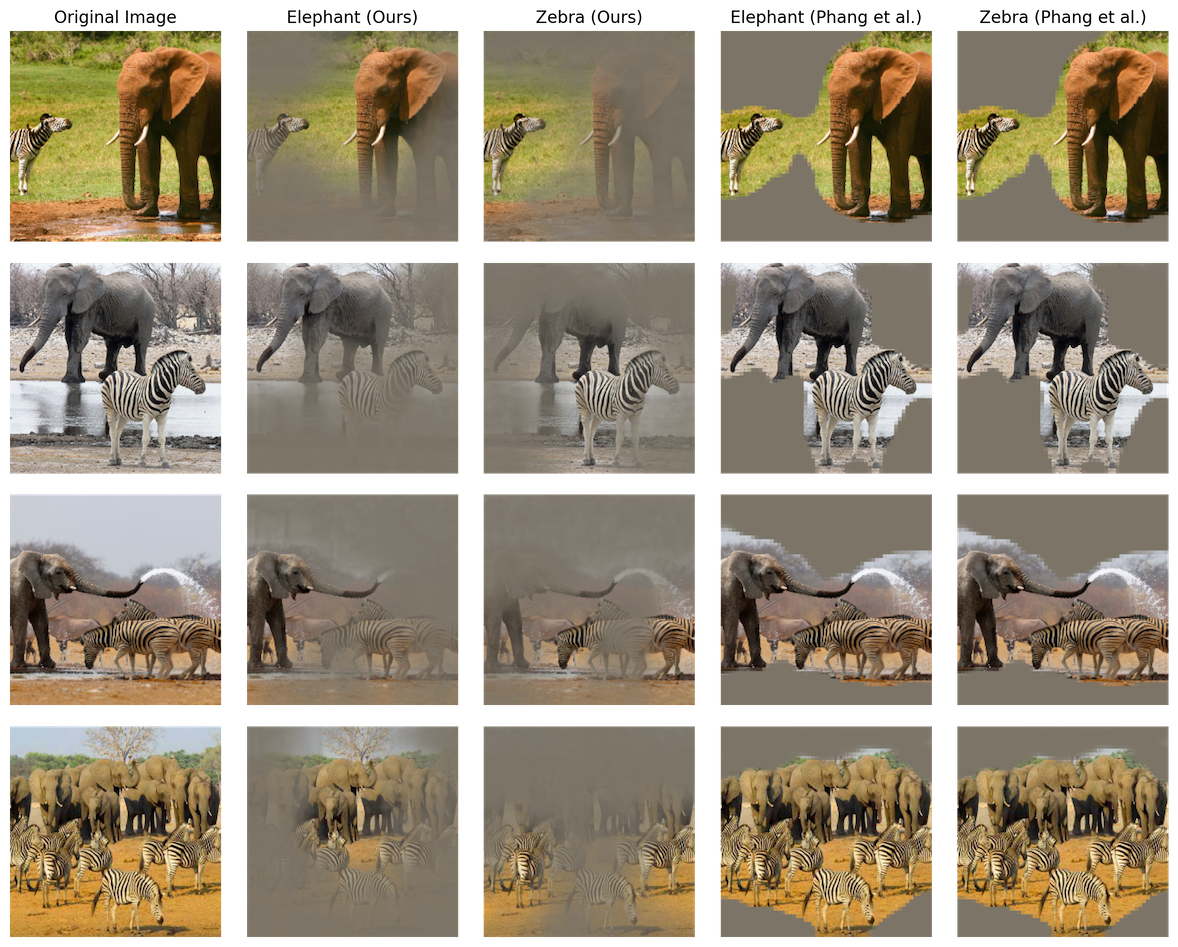}
	\caption{Images containing both elephant(s) and zebra(s), and the corresponding masked ones generated by our method and the best-performing CA model in \citet{phang2020investigating}. The masks by \citet{phang2020investigating} are identical for different labels, and contains both elephant and zebra. In contrast, our method outputs decent masks for elephant and zebra accordingly. For more examples please see \Cref{fig:2label} in \Cref{sec:additionalimagenette}.  }
	\label{fig:elezeb}
\end{figure}

\begin{figure}[htbp] 
	\centering
	\includegraphics[width=0.6\textwidth]{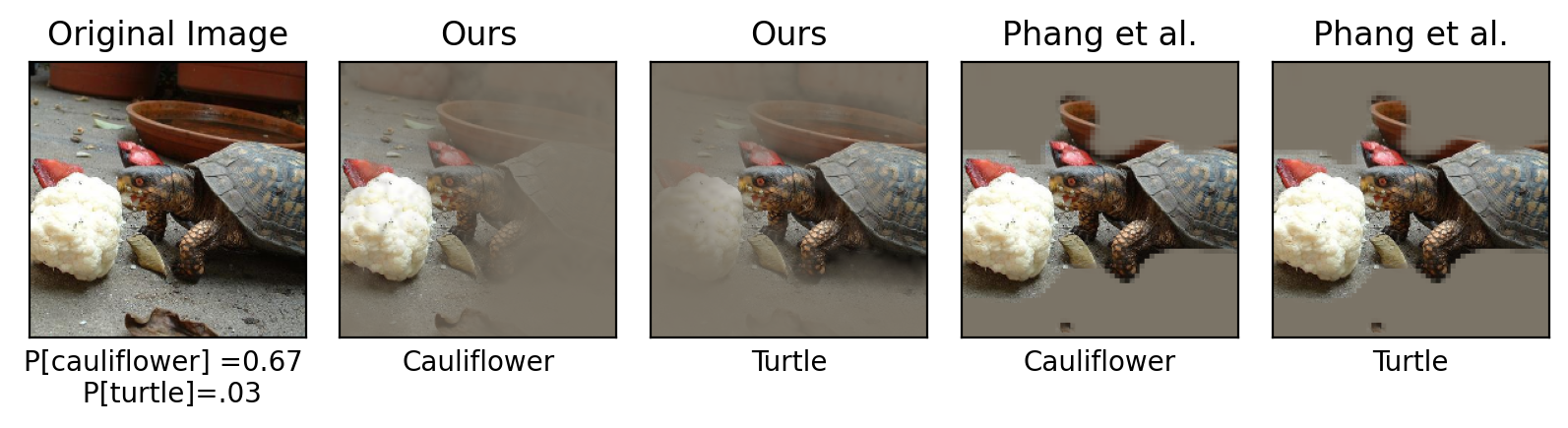}
	\includegraphics[width=0.6\textwidth]{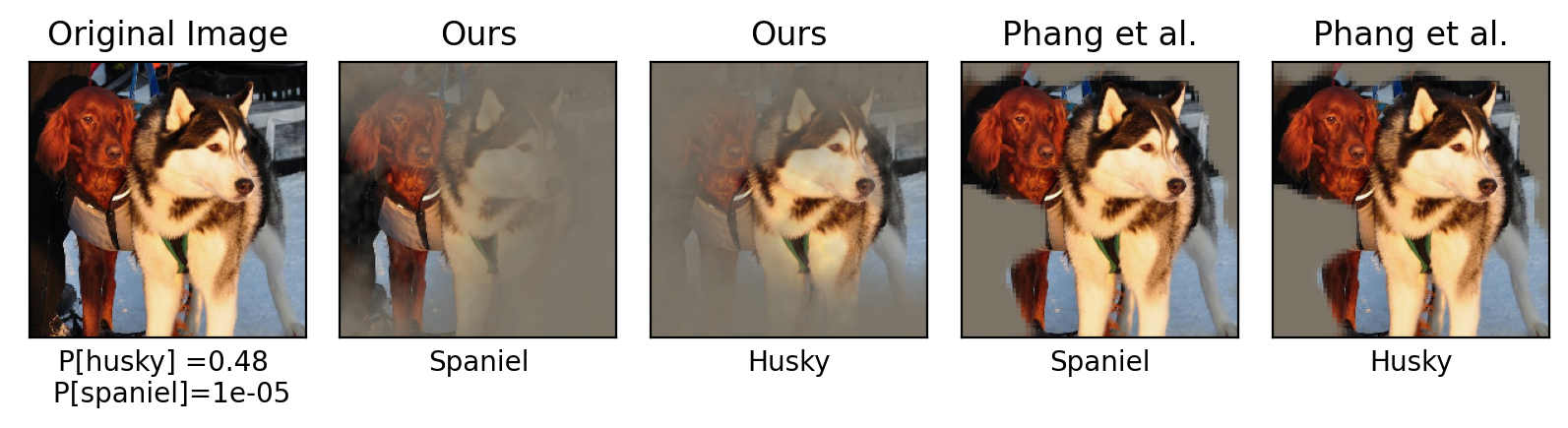}
	\includegraphics[width=0.6\textwidth]{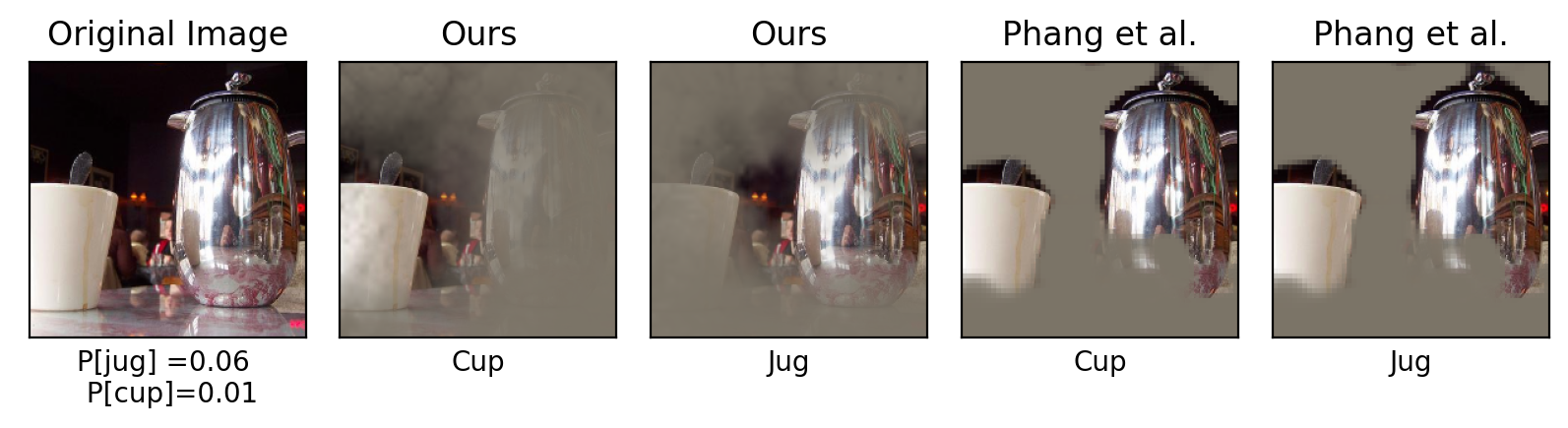}
	\caption{Our masks and masks by~\citet{phang2020investigating} for ImageNet images with two ground truth labels. The best-performing CA model in \citet{phang2020investigating} was used. First (leftmost) column depicts original image with original model probabilities for each ground truth class below the image. Next two columns depict our masks with target label below the image. Final two columns depict the masks for~\citet{phang2020investigating}. The masks by~\citet{phang2020investigating} are identical for different labels, and contains both classes. In contrast, our method outputs descent masks for each class accordingly.  }
	\label{fig:2label}
\end{figure}

\section{Additional Results on CIFAR-10 and Imagenette}\label{sec:additionalimagenette}

\begin{table}[!t]
	    \centering
	    \caption{Worst case completeness and worst case soundness ($\epsilon_1=0.01, \epsilon_2=0.1$) for different $\lambda_{TV}$ in CIFAR-10. The best one in each row is marked bold. }
	    \label{tab:tv_consistency_cifar10}
	    \begin{tabular}{|c|c|c|c|c|}
	        \hline
	        $\lambda_{TV}$ & 0 & 0.001 & 0.01 & 0.1  \\
	        \hline
	        Completeness & \bf 0.99 & 0.99 & 0.89 & 0.80\\
	        Worst soundness & 0.10 &  0.10 & 0.11 & \bf 0.19\\
	        \hline
	    \end{tabular}
	\end{table}

\begin{figure}[htbp]
    \centering
    \includegraphics[height=0.95\textheight]{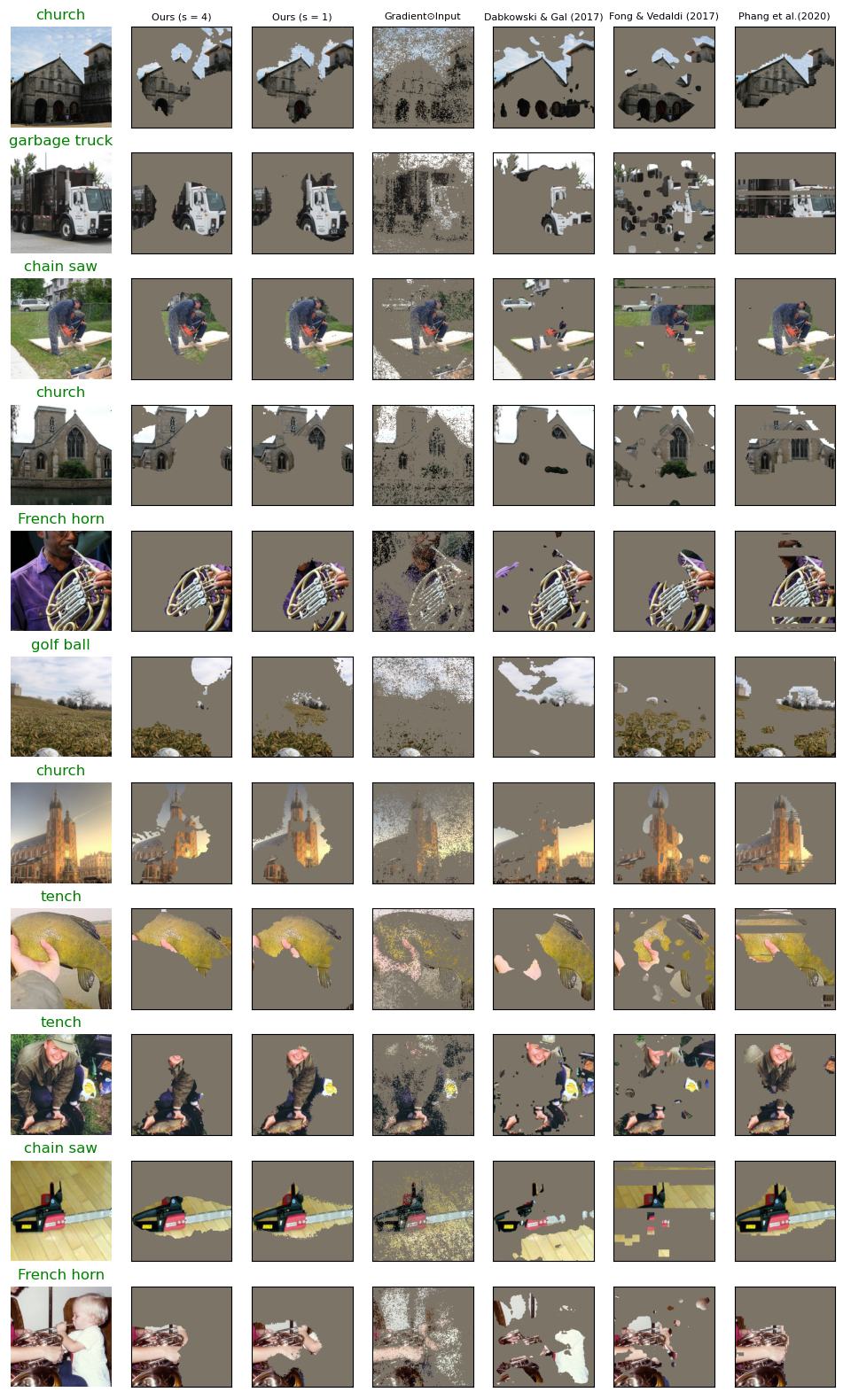}
    \caption{Masks of 10 Imagenette examples generated by different methods. Above each original image is the corresponding correct label. 30\% of the pixels are retained, and the rest pixels are filled with grey.}
    \label{fig:example_masks}
\end{figure}

\begin{figure}
    \centering
    \includegraphics[height=0.95\textheight]{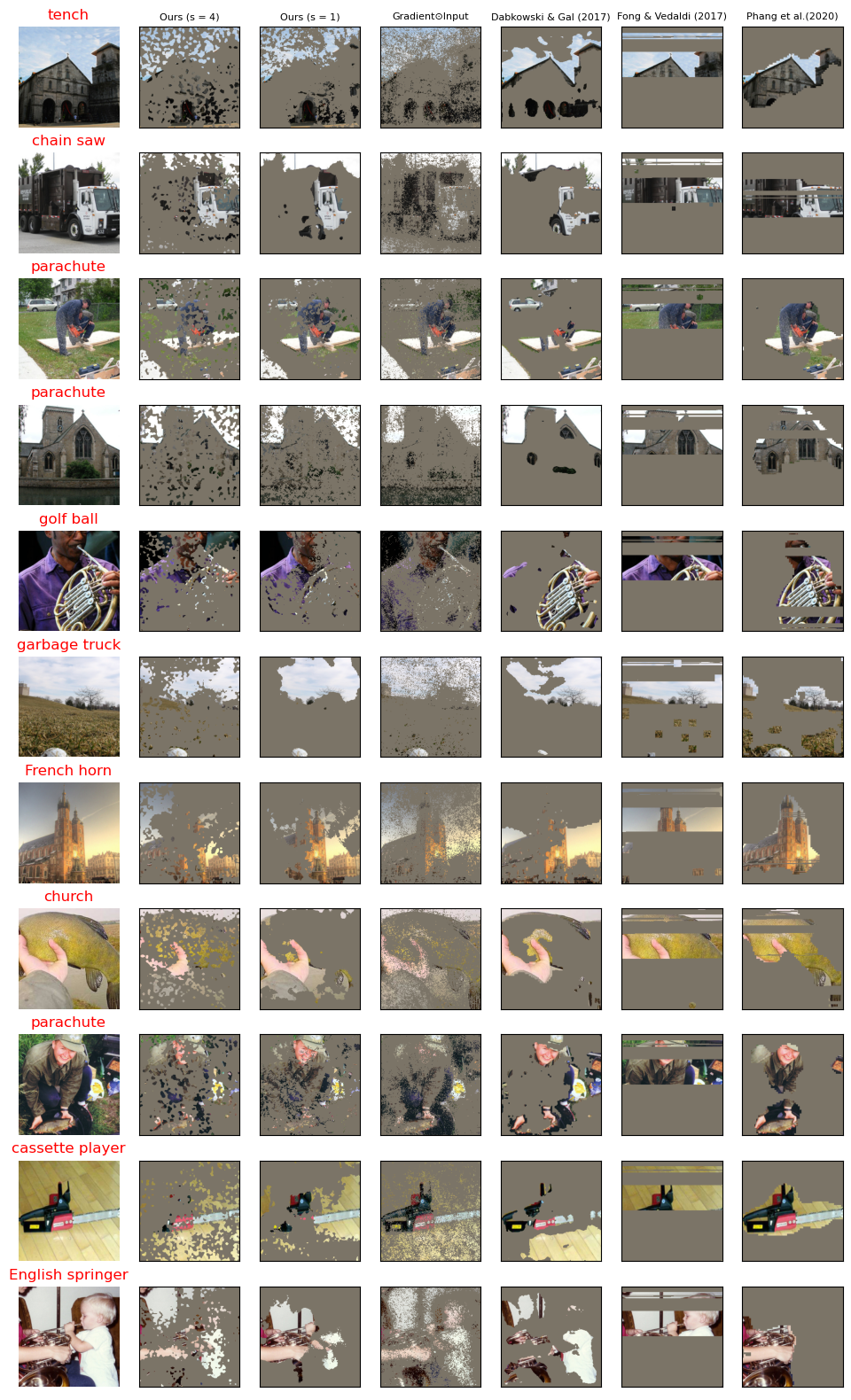}
    \caption{Masks of 10 Imagenette examples generated by different methods for incorrect labels. Above each original image is the  target incorrect label. 30\% of the pixels are retained, and the rest pixels are filled with grey.}
    \label{fig:example_masks_incorrect}
\end{figure}

\begin{table*}[!t]
    \centering
    \caption{Deletion, insertion, saliency metric, worst case completeness and soundness on Imagenette-Corner. Our method with upsampling factor $s=4$, TV penalty $\lambda_{TV}=0.01$ achieves the best completeness and soundness.  ($\uparrow$ indicates higher is better.) The numbers for Centered Gaussian drop compared to Imagenette. }
    \label{tab:imagenette_c_main}
    \begin{tabular}{|c|c c c|c c|}
    \hline
         & Deletion $\downarrow$ & Insertion  & Saliency  & Worst Case  & Worst Case \\
         &                       & (gray)     $\uparrow$            & Metric $\downarrow$  & Completeness $\uparrow$ & Soundness $\uparrow$ \\
    \hline
    Gradient $\odot$ Input & {\bf 0.13} & 0.34 & \bf $-$0.25 & 0.50 & 0.78\\
    \citet{dabkowski2017real} & 0.21 & 0.33 & {\bf $-$0.25} & 0.52 & 0.78\\
    \citet{fong2017interpretable} & \bf 0.15 & 0.53 & \bf $-$0.25 & 0.53 & 0.48 \\
    \citet{phang2020investigating} & { 0.33} & {\bf 0.69} & \bf $-$0.25  & \bf 0.68 & \bf 0.82 \\
    \hline
    Ours & 0.36 & {\bf 0.64} & \bf $-$0.25 & \bf 0.76 & \bf 0.84 \\
    \hline
    Random & 0.35 & 0.35 & $-$0.24 & 0.36 & 0.64\\
    Centered Gaussian & 0.62 & 0.62 &  0.458 & 0.50 & 0.54 \\
    \hline
    \end{tabular}
\end{table*}

In this section, we show additional results for CIFAR-10 and Imagenette experiments. 
\begin{enumerate}[leftmargin=*]
    \item  In \Cref{tab:tv_consistency_cifar10}, we show the worst case completeness and worst case soundness on CIFAR-10 of our method with different TV penalty $\lambda_{TV}$. To showcase of generality of our definition of base metric, here we use a modified version of AUC metric where we set $s\sim \{0.2\dim(x), \ldots, 0.6\dim(x)\}$ in \Cref{def:AUCmetric}. 
    \item We showcase the masks of ten randomly drawn images from Imagenette for different methods in \Cref{fig:example_masks} and \Cref{fig:example_masks_incorrect}. The masks in \Cref{fig:example_masks} is generated for model predictions, while the masks in \Cref{fig:example_masks_incorrect} is generated for incorrect labels. Note there are some wired horizontal lines and shapes for some masks. These are caused by default tie breaking of pixels of masking value 1. We also tried tie breaking according to Centered Gaussian, which does not improve the performance of those methods.
    \item We create a new dataset Imagenette-Corner based on Imagenette, where each image is a random corner of the original image\footnote{On ImageNet or Imagenette, the common way to process image is first resizing it to $256\times 256$ pixels, and then crop the center $224\times 224$ pixels. In Imagenette-Corner, instead of center cropping, we take one of the four $180\times 180$-pixel corners and resize it to $224\times 224$ pixels.}. We show the results on Imagenette-Corner in \Cref{tab:imagenette_c_main}. The results show that the good performance of Centered Gaussian on Imagenette is likely due to the bias of datasets (that objects are centered with high probability). Saliency metric on Imagenette-Corner are the same for most of the methods, likely because of the coarse selection of hyperparameters as in \citet{dabkowski2017real}.
\end{enumerate}




\section{Experimental Details and Additional Experiments}
\label{sec:additionalexpers}
	In this section we expand upon the experiments in \Cref{sec:expers} and complement them with more experiments on the ImageNet, CIFAR-10 and CIFAR-100 datasets.
	For each of the datasets we test the following:
	\begin{itemize}[leftmargin=*]
	    \item {\bf Visualization:} For various values of TV regularization (and upsampling for ImageNet), we visualize the mask and also what part of the image a sparse version of the mask highlights. We do so for masks learned for the correct label and also for the second most probable label as predicted by the model. The common trend is that while TV regularization (and upsampling) make the masks more human interpretable, it also makes it harder to find a good mask for the incorrect label, thus improving soundness.

	    \item {\bf AUC curve:} We plot the output model probability for a masked input as more pixels from the original image are selected. The 4 plots denote replacing remaining pixels with gray pixels or pixels from a random image, and masks to fit the correct or incorrect labels, i.e. most probable and second most probable labels.
	    Again, we find the TV regularization and upsampling help with soundness; i.e. inability to find mask for the second most probable label.
	    For mask $M$, if $\bar{M}(p)$ denotes the discrete mask with top $p$ fraction of the pixels from $M$ picked.
	    We plot $\E_{x} \left[\E_{x'\sim\Gamma}[f(\bar{M}(p)\odot x + (1 - \bar{M}(p)) \odot x', a)]\right]$ v/s $p$, where $\Gamma$ is either a random image or a gray image, and $a$ is either the correct label for $x$ or the second best label.
	    We note that replacing with gray and random image lead to similar looking plots, with the probability estimate of random image being more pessimistic.
	    This justifies the motivation for our procedure that learns a mask to solve a ``harder task'' of random image replacement.

	    \item {\bf Completeness/soundness:} We evaluate the completeness and soundness scores, as defined in \Cref{eq:cs} and \Cref{def:worst_cs}.
	    In particular for any input $x$, we only evaluate the scores for the top model prediction $a$ and the second most probable label $a'$ and report the worst case completeness and soundness for these 2 labels.
	    For all experiments in this section, we use $\epsilon_{1}=0$ and $\epsilon_{2}=0.1$ (from \Cref{eq:cs}).

	    \item {\bf Intrinsic metrics:} We evaluate our masks on other intrinsic metrics from prior work, and compare to baseline saliency methods.
	    Our baselines include Gradient $\odot$ Input \citep{shrikumar2017learning}, Smooth-Grad \citep{smilkov2017smoothgrad}, Real Time Saliency \citep{dabkowski2017real} (for ResNet-50 on ImageNet), and Random indicating a random Gaussian mask as a control. We use Captum \citep{kokhlikyan2020captum} for Gradient $\odot$ Input and Smooth-Grad implementations and the original author code\footnote{\url{https://github.com/PiotrDabkowski/pytorch-saliency}} for Real Time Saliency. When calculating the Saliency Metric (SM) \citep{dabkowski2017real} we tune the threshold $\delta$ on a holdout set of size 100 with $\delta$ between $0$ and $5$ in increments of $0.2$ as in prior work.
	    
	    For the saliency method of \citet{fong2017interpretable} that we only used on the Imagenette, we adapt the most popular implementation on GitHub\footnote{\url{https://github.com/jacobgil/pytorch-explain-black-box}}. The implementation contains minor deviations from the original paper as described on its main page. For \citet{phang2020investigating}, we used their best CA model pretrained and provided in original author code\footnote{\url{https://github.com/zphang/saliency_investigation}}.
	 	\end{itemize}

\subsection{CIFAR-10 Experiments}
\label{subsection:cif10}

\begin{figure}[t!]
\centering
\includegraphics[scale = 0.25]{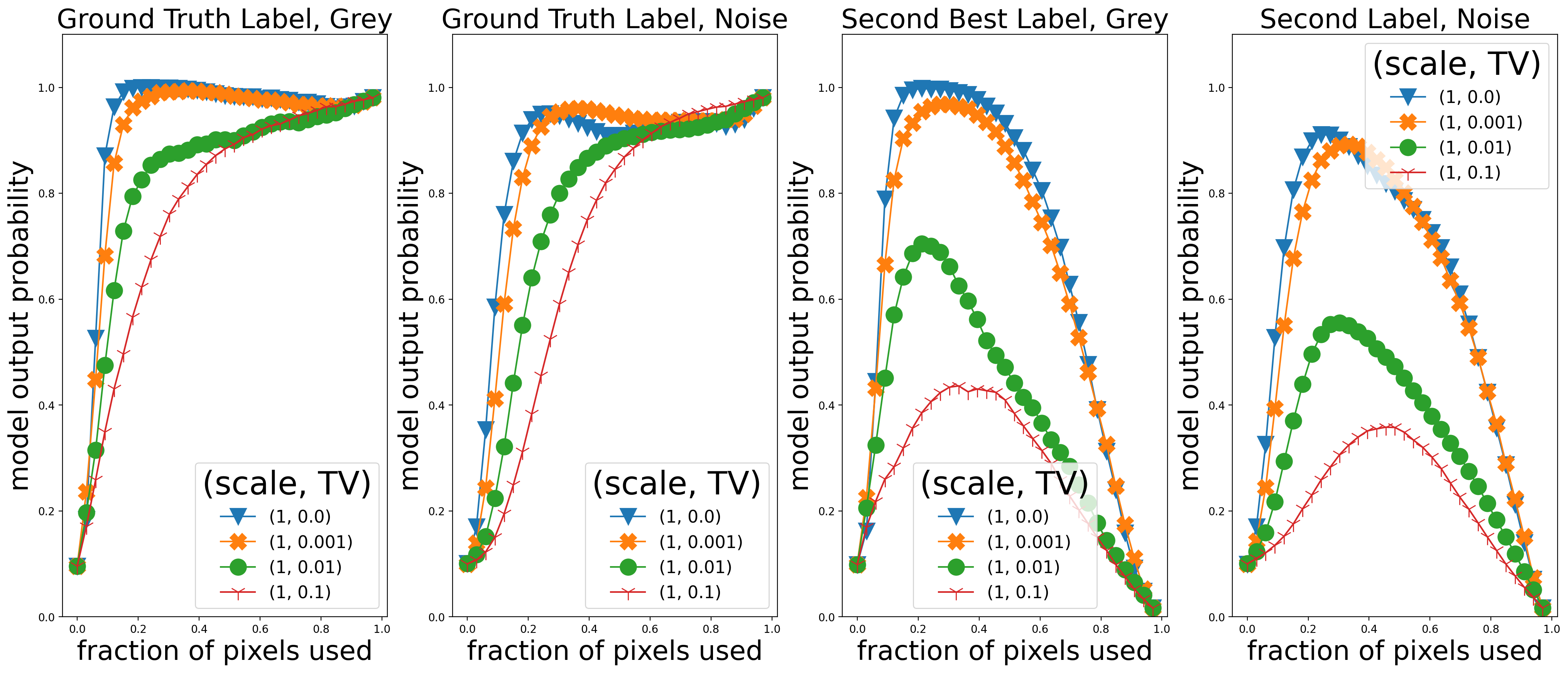}
\caption{[CIFAR-10] AUC curves with as the fraction of pixels retained from the original images based on the mask varies from 0 to 1.0 on the X-axis. The probabilities assigned by the model (averaged over 1600 images) on the Y-axis. {\bf Left: } Mask learned for ground truth label, probabilities for ground truth label while replacing remaining pixels with gray. {\bf Center Left: } Mask learned for ground truth label, probabilities for ground truth label while replacing remaining pixels with other image pixels. {\bf Center Right: } Mask learned for second best label, probabilities for second best label while replacing remaining pixels with gray. {\bf Right: } Mask learned for second best label, probabilities for second best label while replacing remaining pixels with other image pixels. We see that increasing TV regularization results in only a mild drop in completeness, but significantly improves soundness.}
	\label{fig:cif10_auc}	
\end{figure}

\begin{table}[t!]
\begin{center}
	\caption{Completeness and soundness for a ResNet-164 trained model on CIFAR-10, as defined in \Cref{eq:cs}.
	Each column represents mask learned using our procedure, with different TV regularization strengths (0.0, 0.001, 0.01, or 0.1).
	``Gray'' indicates pixels were grayed during AUC calculation and ``Random image'' indicates they were replaced with other images. }
	\label{table:cif10_complete_sound}
	\vspace{0.1in}
\begin{tabular}{ |c|c|c|c| c| } 
 \hline
 Gray & TV = 0.0 & TV= 0.001 & TV= 0.01 & TV= 0.1   \\ 
 \hline
Completeness ($\alpha$)       & 0.92   & 0.91  & 0.83    &  0.77    \\

Soundness ($\beta$)     &  0.17   & 0.18   &  0.39  & 0.57   \\ 
 \hline
 Random image & TV = 0.0 & TV= 0.001 & TV= 0.01 & TV= 0.1   \\ 
 \hline
Completeness ($\alpha$)       &   0.86   & 0.85   &   0.77    &  0.70   \\
Soundness ($\beta$)     &  0.20   & 0.21   & 0.44  & 0.62   \\
 \hline
\end{tabular}
\end{center}
\end{table}

\begin{table}[t!]
\begin{center}
	\caption{Performance of our method on CIFAR-10 and some baselines on various intrinsic saliency metrics proposed in prior work.
	Downarrow (uparrow) means lower (higher) is better.
	We find that while both our masks (learned with and without TV) have very good performance on the insertion metric. The deletion and saliency metrics are uninformative in this case, since all methods are as good (or worse) compared to a random mask.}
	\label{table:cif10}
	\vspace{0.1in}
\begin{tabular}{ |c|c|c|c| c| c| } 
 \hline
 & Gradient $\odot$ Input & Our method & Our Method & Smooth-Grad & Random \\ 
 &  & ($\lambda_{TV}=0.01$) & ($\lambda_{TV}=0$) & saliency &   \\ 
 \hline
 Deletion $\downarrow$          & 0.32    &  0.37   &    0.59 & 0.31 &  0.26  \\

Insertion (blur) $\uparrow$     &  0.60   & 0.88   &    0.94   & 0.66 & 0.36 \\ 
Insertion (gray) $\uparrow$     &  0.51  & 0.83    &    0.92  & 0.55 & 0.26 \\ 
Saliency Metric $\downarrow$    & 0.22   & 0.22    & 0.22   & 0.23 &  0.22 \\ 
 \hline
\end{tabular}
\end{center}
\end{table}

We also run our method from \Cref{sec:procedures} on the CIFAR-10 dataset using a pretrained ResNet-164 architecture\footnote{\url{https://github.com/bearpaw/pytorch-classification}. The ResNet-110 model in this repository is actually a ResNet-164 model.}.
For all experiments we learn a mask $M\in\R^{32\times 32}$, thus using a scaling factor of $s=1$ (no upsampling).
We train masks for 1600 images that were correctly classified by the pretrained ResNet-164 using regularization parameter $\lambda_{TV}\in\{0, 0.001, 0.01, 0.1\}$.
We use a (fixed) L1 regularization value of .001 for all masks.

We visualize the masks learned for the correct label in \Cref{subfig:ciftrue} and in \Cref{subfig:cifsbest} we visualize the same for the second best label predicted by the ResNet-164 model.
We also visualize the masks for all labels for some randomly picked images in \Cref{fig:artifact} to demonstrate the commonness of artifact, especially for the incorrect labels.
The AUC curves in \Cref{fig:cif10_auc} suggest a similar trend to that of Imagenette, adding TV regularization results in only a mild drop in completeness, but significantly improves soundness.
Evaluation of our masks, compared to some gradient baselines, on intrinsic metrics can be found in \Cref{table:cif10}.
We report the completeness and soundness results for CIFAR-10 in Table \ref{table:cif10_complete_sound} for TV values in $(0.0,0.001,0.01,0.1)$ calculated using a ResNet-164 model.

\begin{figure}[th!]
\centering
\begin{subfigure}[t]{.85\textwidth}
\centering
\includegraphics[width=\textwidth]{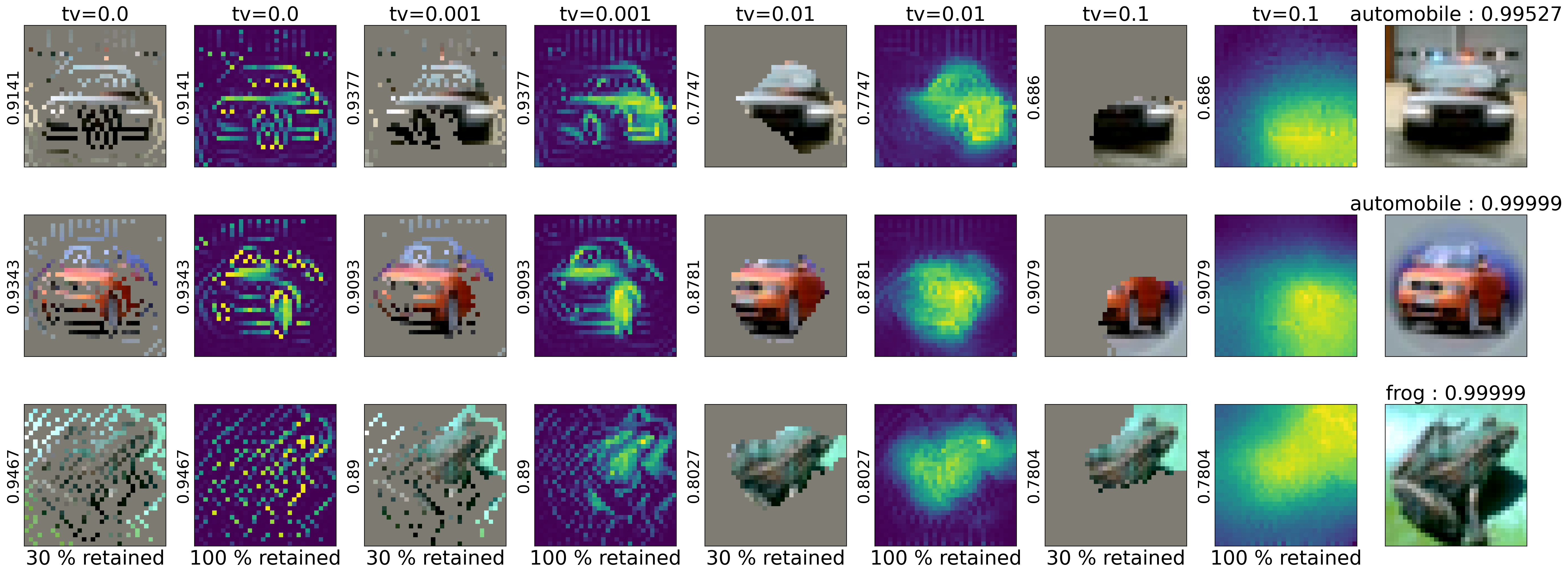}
\caption{}
\label{subfig:ciftrue}
\end{subfigure}
\begin{subfigure}[t]{.85\textwidth}
\centering
\includegraphics[width=\textwidth]{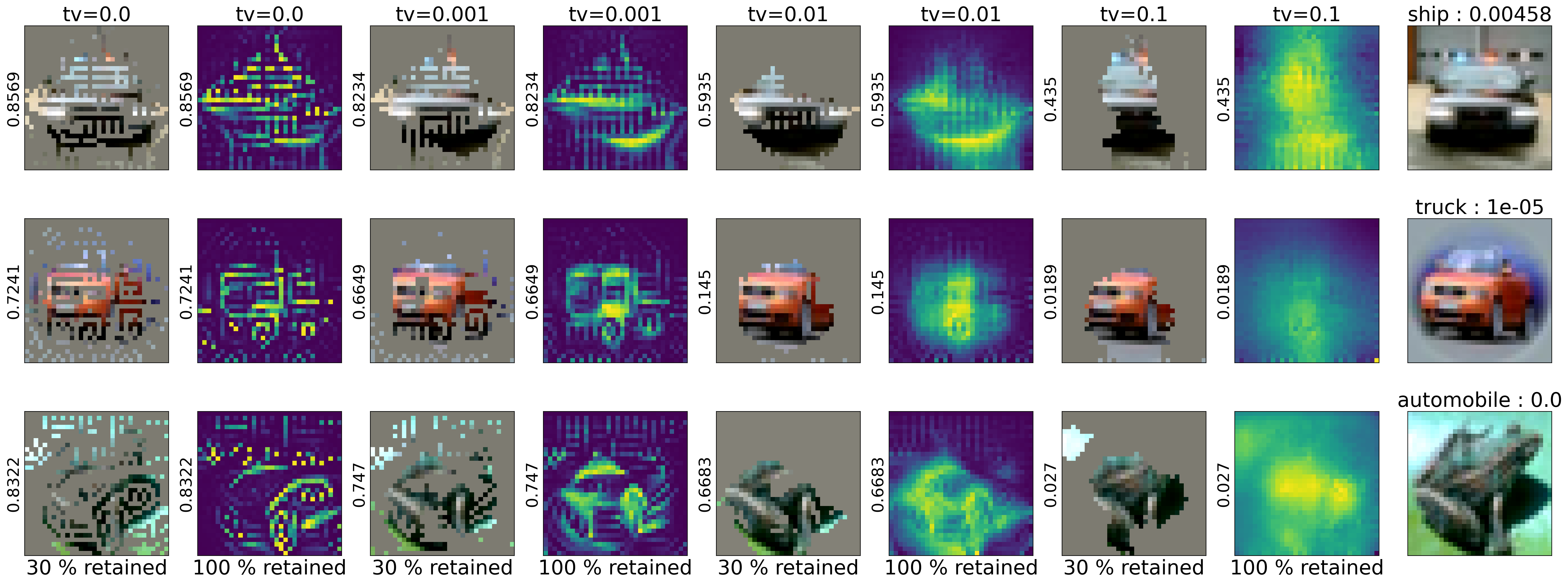}
\caption{}
\label{subfig:cifsbest}
\end{subfigure}
\caption{Details in \Cref{subsection:cif10} {\bf Panel \ref{subfig:ciftrue} } Masks learned for the correct label of CIFAR-10 images using the procedure outlined in \Cref{sec:procedures} with ResNet-164. Columns (1,3,5,7) depict masked images at 30 (retained) \% mask sparseness. Columns (2,4,6,8) depict the original mask. TV values shown above. Original image shown in rightmost column. Model probability of correct label for masked images on y axis. {\bf Panel \ref{subfig:cifsbest}} Masks leared for the second most probable label of CIFAR-10 images using the procedure outlined in \Cref{sec:procedures} on ResNet-164. Columns (1,3,5,7) depict masked images at 30 \% mask sparseness. Columns (2,4,6,8) depict the original mask.TV values shown above. Original image shown in rightmost column. Model probability of second best label for masked images on y axis. }
	\label{fig:cif10_first}	
\end{figure}

\begin{figure} 
\begin{subfigure}{.99\textwidth}
  \centering
  \includegraphics[width=.9\linewidth]{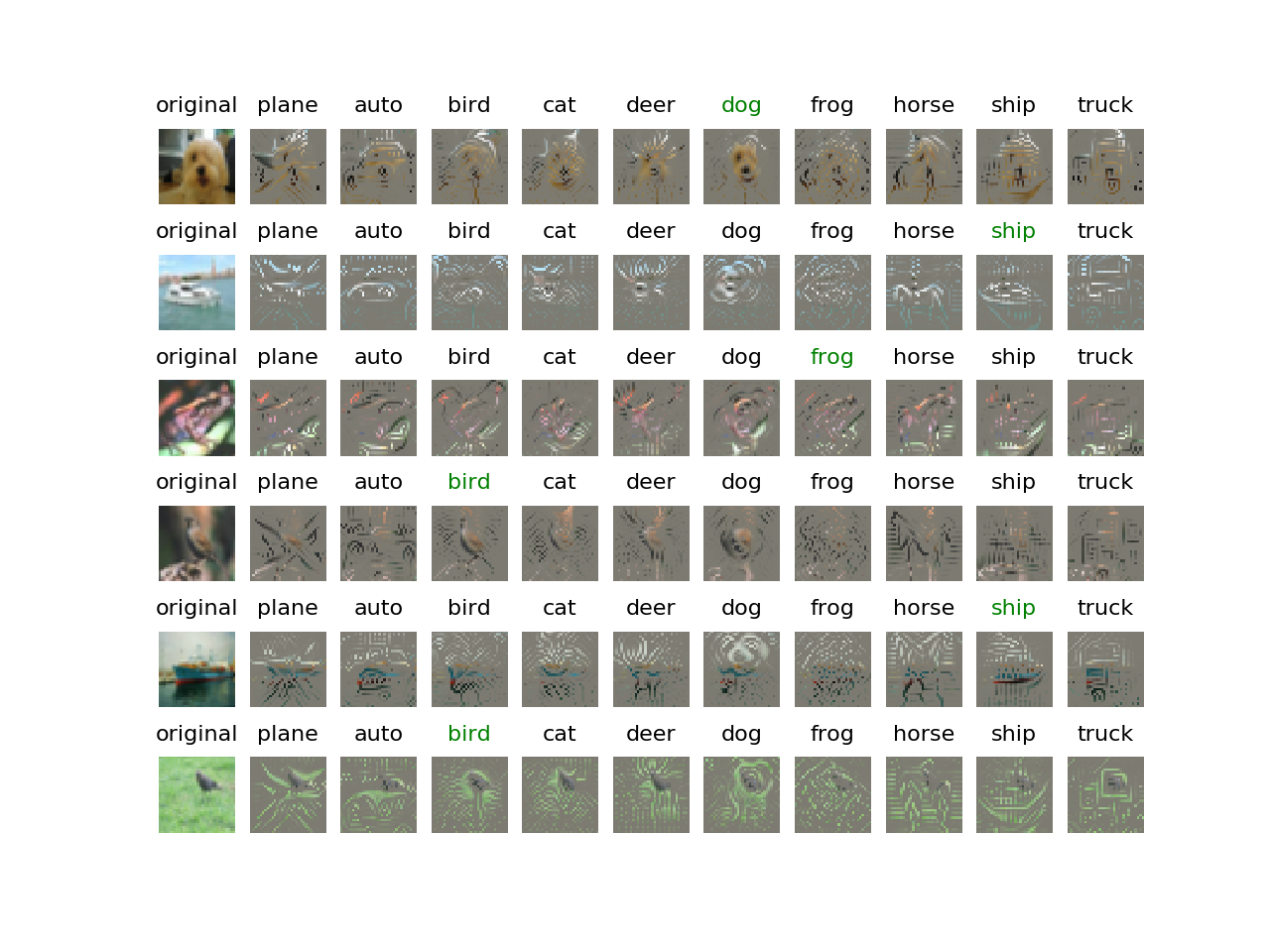}
  \vspace{-0.1in}
  \caption{Our method with no TV regularization}
  \label{fig:artifact-0}
\end{subfigure}

\begin{subfigure}{.99\textwidth}
  \centering
  \includegraphics[width=.9\linewidth]{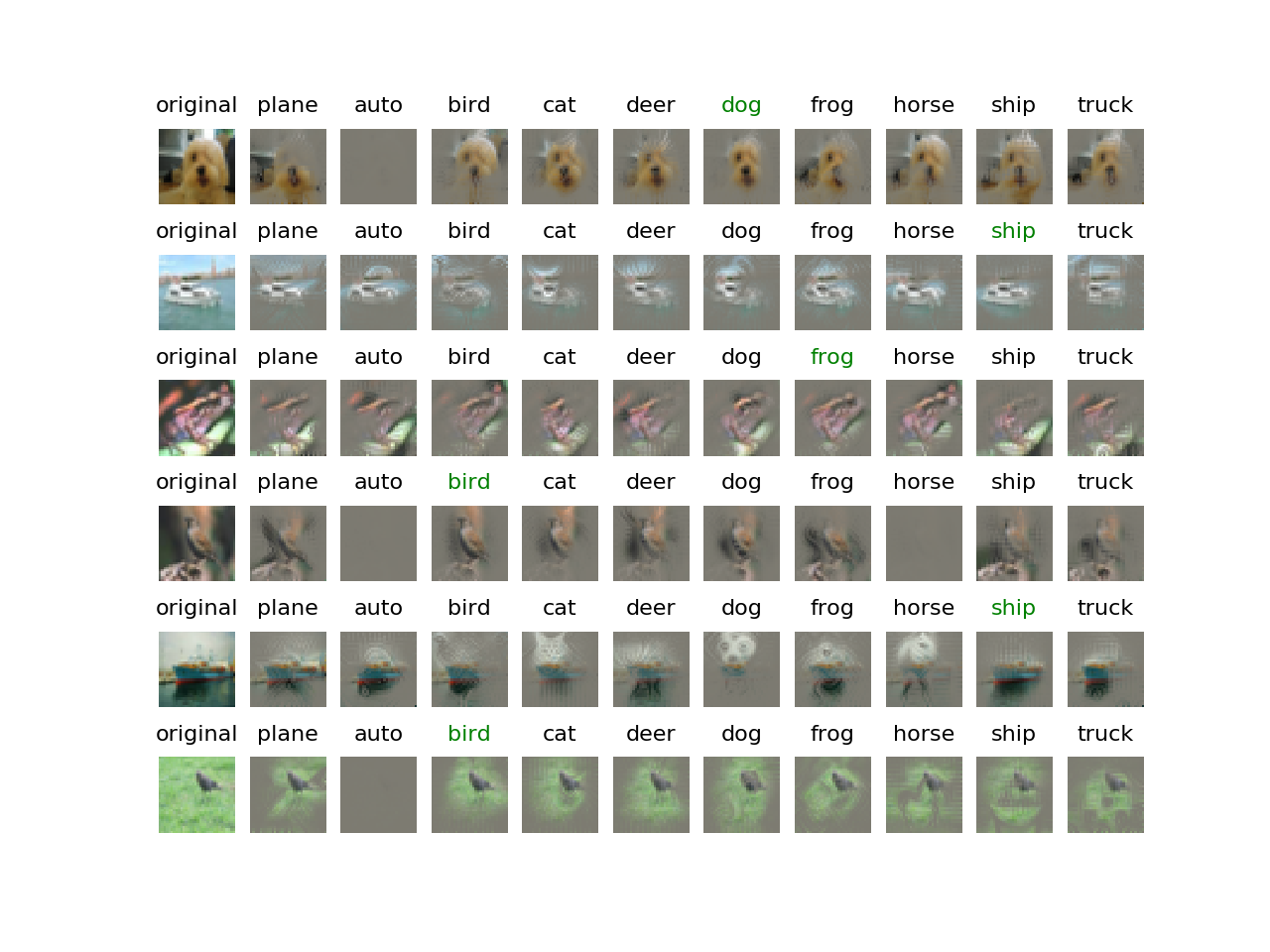}
  \vspace{-0.1in}
  \caption{Our method with TV regularization $\lambda_{TV}= 0.01$ }
  \label{fig:artifact-0.01}
\end{subfigure}

\caption{A demonstration of artifacts created by masking on CIFAR-10. Pixels (partially) masked out are filled with gray based on the fractions they are masked out. Masks generated without or only with low level regularization can easily produce artifacts. It is more common and/or severe for the incorrect label than correct label. }
\label{fig:artifact}
\end{figure}

\subsection{CIFAR-100 Experiments}
\label{subsection:cif100}

\begin{figure}[t!]
\centering
\includegraphics[scale = 0.25]{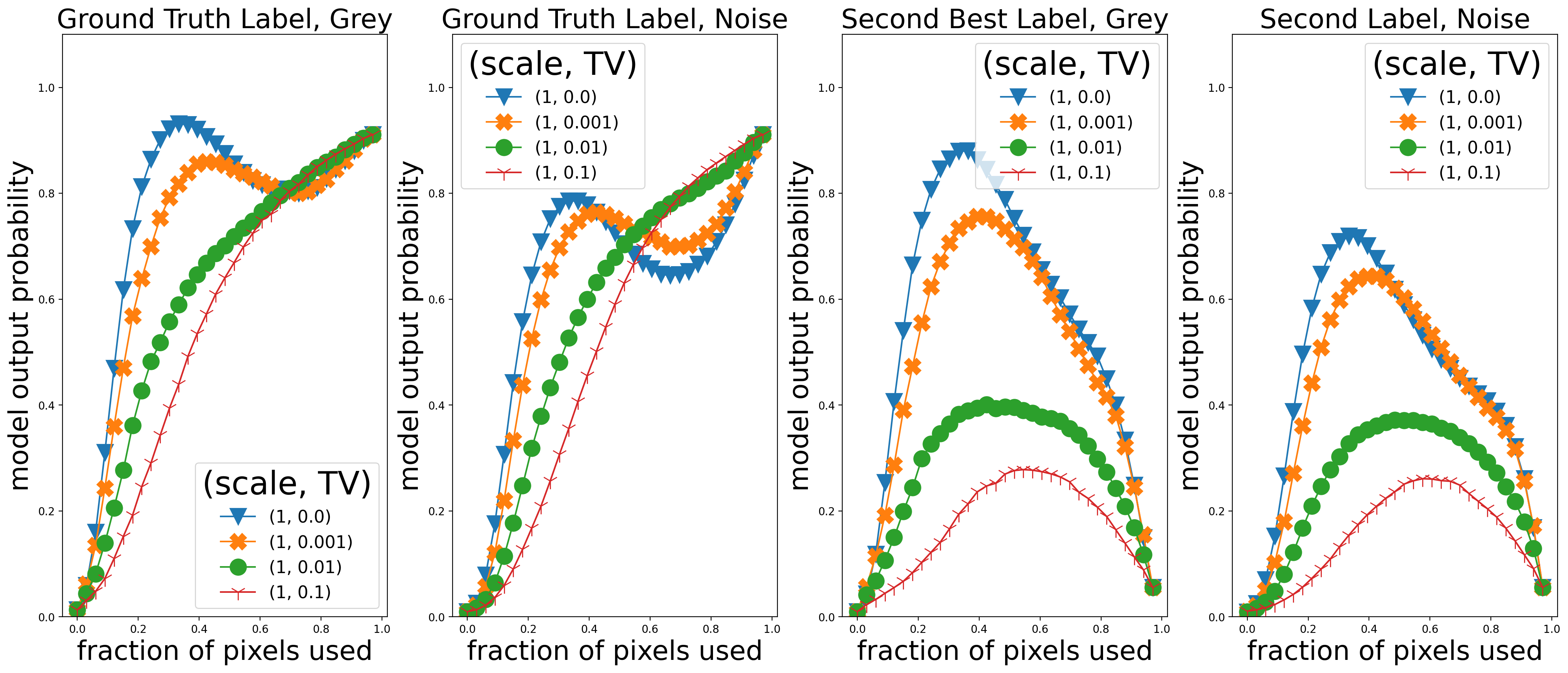}
\caption{[CIFAR-100] AUC curves with as the fraction of pixels retained from the original images based on the mask varies from 0 to 1.0 on the X-axis. The probabilities assigned by the model (averaged over 1600 images) on the Y-axis. {\bf Left: } Mask learned for ground truth label, probabilities for ground truth label while replacing remaining pixels with gray. {\bf Center Left: } Mask learned for ground truth label, probabilities for ground truth label while replacing remaining pixels with other image pixels. {\bf Center Right: } Mask learned for second best label, probabilities for second best label while replacing remaining pixels with gray. {\bf Right: } Mask learned for second best label, probabilities for second best label while replacing remaining pixels with other image pixels. We see that increasing TV regularization results in only a mild drop in completeness, but significantly improves soundness.}
	\label{fig:cif100_auc}	
\end{figure}

\begin{table}[t!]
\begin{center}
	\caption{Completeness and soundness for a ResNet-164 trained model on CIFAR-100, as defined in \Cref{eq:cs}.
	Each column represents mask learned using our procedure, with different TV regularization strengths (0.0, 0.001, 0.01, or 0.1).
	``Gray'' indicates pixels were grayed during AUC calculation and ``Random image'' indicates they were replaced with other images.}
	\label{table:cif100_complete_sound}
	\vspace{0.1in}
\begin{tabular}{ |c|c|c|c| c| } 
 \hline
 Gray & TV = 0.0 & TV= 0.001 & TV= 0.01 & TV= 0.1   \\ 
 \hline
Completeness ($\alpha$)       & 0.80   & 0.74  & 0.64    &  0.55    \\

Soundness ($\beta$)     &  0.28   & 0.34   &  0.58  & 0.75   \\ 
 \hline
 Random image & TV = 0.0 & TV= 0.001 & TV= 0.01 & TV= 0.1   \\ 
 \hline
Completeness ($\alpha$)       &   0.67   & 0.66   &   0.61    &  0.54   \\
Soundness ($\beta$)     &  0.35   & 0.40   & 0.61  & 0.78   \\
 \hline
\end{tabular}
\end{center}
\end{table}

\begin{table}[t!]
\begin{center}
	\caption{Performance of our method on CIFAR-100 and some baselines on various intrinsic saliency metrics proposed in prior work.
	Downarrow (uparrow) means lower (higher) is better.
	We find that while both our masks (learned with and without TV) have very good performance on the insertion metric. The deletion and saliency metrics are uninformative in this case, since all methods are as good (or worse) compared to a random mask.}
	\label{table:cif100}
	\vspace{0.1in}
\begin{tabular}{ |c|c|c|c| c| c| } 
 \hline
 & Gradient $\odot$ Input & Our method & Our Method & Smooth-Grad & Random \\ 
 &  & ($\lambda_{TV}=0.01$) & ($\lambda_{TV}=0$) & saliency &   \\ 
 \hline
 Deletion $\downarrow$       &    0.10   &  0.17   &     0.10     &    0.29    &  0.11   \\

Insertion (blur) $\uparrow$      &   0.36   & 0.71   &   0.82  & 0.39 & 0.20 \\ 
Insertion (gray) $\uparrow$      &   0.27   & 0.62   &    0.76 &   0.29 & 0.11 \\ 
Saliency Metric $\downarrow$        & 0.77    & 0.77     & 0.77   &     0.79  &  0.77 \\ 
 \hline
\end{tabular}
\end{center}
\end{table}

We run the same experiment for CIFAR-100 using the corresponding ResNet\-164 model.
We visualize the masks learned for the correct label in \Cref{subfig:cif100true} and in \Cref{subfig:cif100sbest} we visualize the same for the second best label predicted by the ResNet-164 model.
The AUC curves in \Cref{fig:cif100_auc} suggest a similar trend to that of Imagenette, adding TV regularization results in only a mild drop in completeness, but significantly improves soundness.
Evaluation of our masks, compared to some gradient baselines, on intrinsic metrics can be found in \Cref{table:cif100}. We place a downarrow after the name of the metric to indicate a lower value is considered better and an uparrow when a higher value is considered better. We evaluate on a randomly selected subset of 1600 data points where the model had correct top 1 accuracy.
We report the completeness and soundness results for CIFAR-100 in Table \ref{table:cif100_complete_sound} for TV values in $(0.0,0.001,0.01,0.1)$ calculated using a ResNet-164 model.

\begin{figure}[t!]
\centering
\begin{subfigure}[t]{.85\textwidth}
\centering
\includegraphics[width=\textwidth]{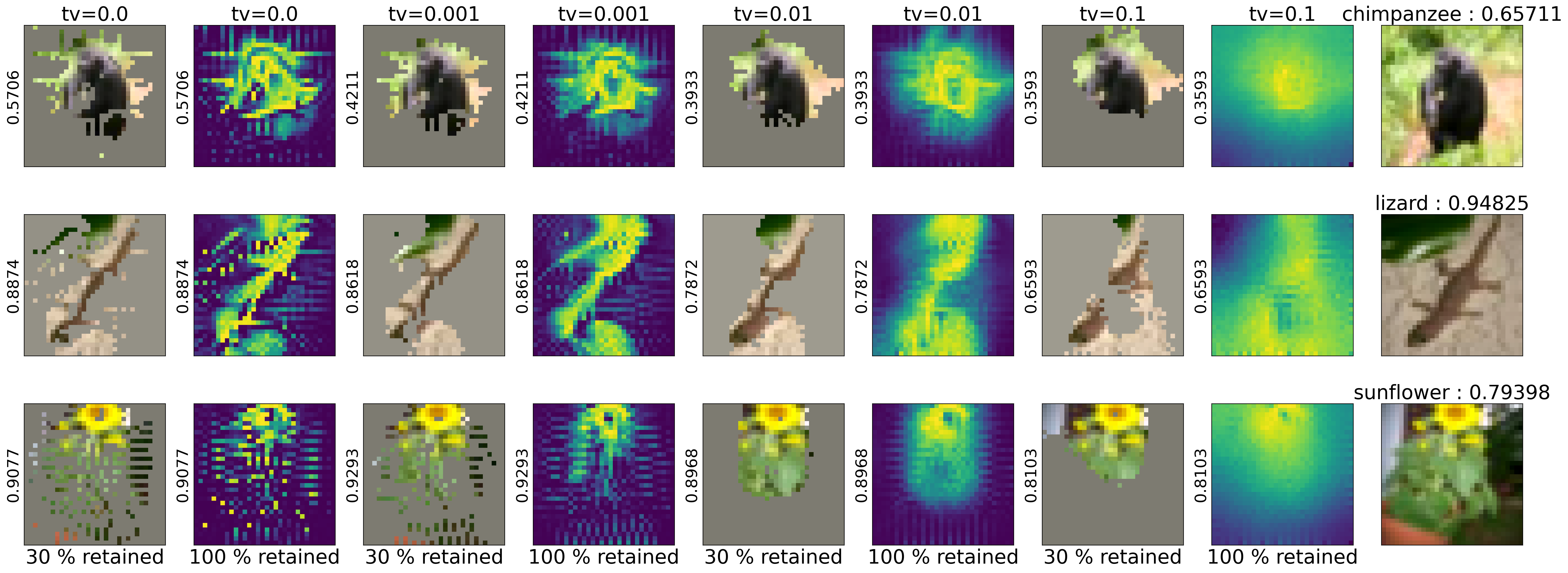}
\caption{}
\label{subfig:cif100true}
\end{subfigure}
\begin{subfigure}[t]{.85\textwidth}
\centering
\includegraphics[width=\textwidth]{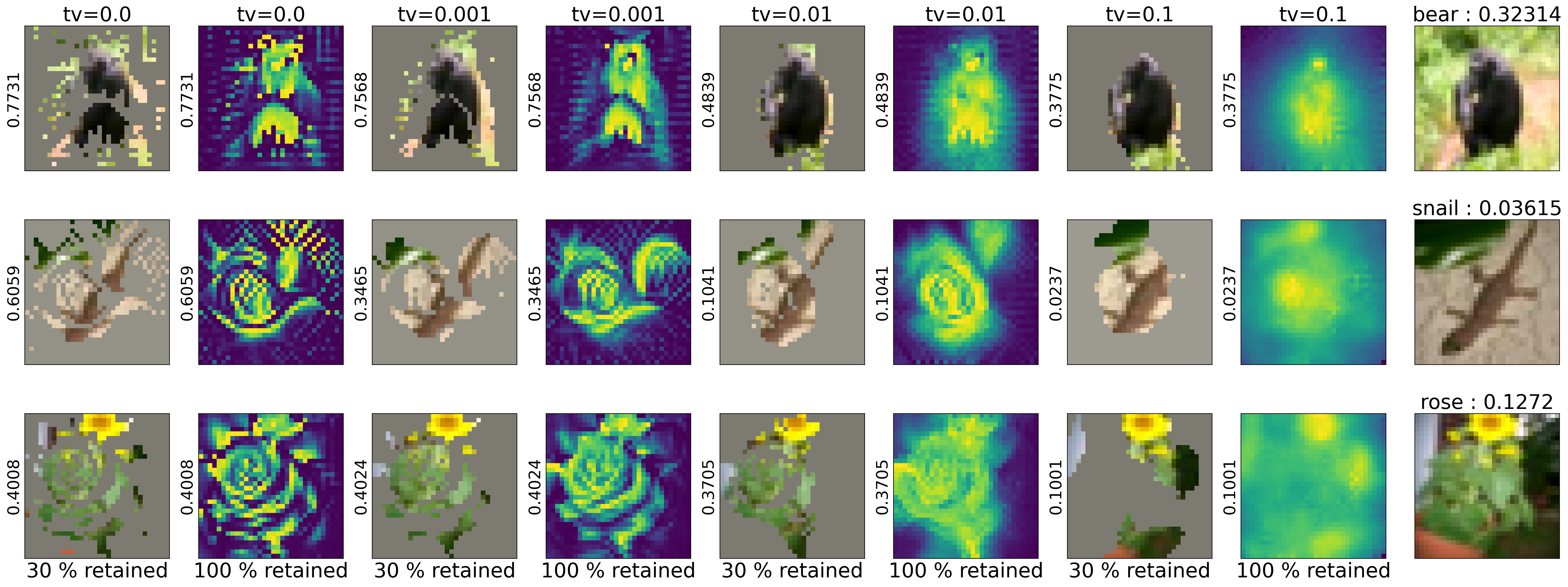}
\caption{}
\label{subfig:cif100sbest}
\end{subfigure}
\caption{Details in \Cref{subsection:cif100} {\bf Panel \ref{subfig:cif100true} } Masks learned for the correct label of CIFAR-100 images using the procedure outlined in \Cref{sec:procedures} on ResNet-164. Columns (1,3,5,7) depict masked images at 30 (retained) \% mask sparseness. Columns (2,4,6,8) depict the original mask. TV values shown above. Original image shown in rightmost column. Model probability of correct label for masked images on y axis. {\bf Panel \ref{subfig:cif100sbest}} Masks learned for the second most probable label of CIFAR-100 images using the procedure outlined in \Cref{sec:procedures} on ResNet-164. Columns (1,3,5,7) depict masked images at 30 \% mask sparseness. Columns (2,4,6,8) depict the original mask.TV values shown above. Original image shown in rightmost column. Model probability of second best label for masked images on y axis.}
	\label{fig:cif100_first}	
\end{figure}

\subsection{Experiments on ImageNet}

\label{subsection:rn18ImageNet}
In \Cref{subfig:imgtrue} we depict the the masks for TV values in $\{0.0,0.01\}$ for a ResNet-18 model on ImageNet for the ground truth label and in \Cref{subfig:imgsbest} we depict the same for the second best label. We also experiment with the effect of upsampling (US) the mask, whereby we learn a mask of size (56,56) and upsample to size (224,224). We use a fixed L1 regularization value of 2e-5.
We depict our results on ImageNet and ResNet-18 in \Cref{table:ImageNetResNet-18} 

For the deletion metric, we note that most methods have comparable or worse performance than the random mask, which suggests that the metric does not give us much signal about the goodness of the saliency maps.
On the insertion metric, we find that mask learned by not adding the TV penalty significantly beats other methods.
The mask learned using TV penalty, on the other hand, has impressive performance on both the insertion AUC and saliency metric (SM).

\myparagraph{Completeness and Soundness on ImageNet and ResNet-18.} We report our results in \Cref{table:ImageNet_complete_sound} for TV values in $(0, 0.01)$ for both graying (Gray) and replacing with other image pixels (Random image). Additionally, we investigate the effect of upsampling (US) where we derive a (56,56) and upsample by a factor of 4 to a $(224,224)$  mask.

\begin{table}[t!]
\begin{center}
	\caption{Completeness and soundness for a ResNet-18 model on ImageNet as defined in \Cref{eq:cs}.
	Each column contains a represents mask learned using our procedure, with our without upscaling and different TV regularization strengths.
	``Gray'' indicates pixels were grayed during AUC calculation and ``Random image'' indicates they were replaced with other images.
	No US indicates the full (224,224) mask was derived and US indicates a $(56,56)$ mask was derived then upsampled by a factor of $4$.
	TV indicates a TV regularization value of 0.0 or 0.01.}
	\label{table:ImageNet_complete_sound}
	\vspace{0.1in}
\begin{tabular}{ |c|c|c|c| c| } 
 \hline
Gray & TV = 0.0 & TV = 0.01 & US TV = 0.0 & US TV = 0.01  \\ 
 \hline
Completeness ($\alpha$)     & 0.97   & 0.76  & 0.87    &  0.71    \\

Soundness ($\beta$)     &  0.19   & 0.70   &  0.38  & 0.75   \\ 

\end{tabular}\\
\begin{tabular}{ |c|c|c|c| c| c| } 
 \hline
Random image & TV = 0.0 & TV = 0.01 & US TV = 0.0 & US TV = 0.01   \\ 
 \hline
Completeness ($\alpha$)    & 0.89   & 0.61  & 0.74    &  0.59    \\

Soundness ($\beta$)     &  0.25   & 0.83   &  0.52  & 0.86   \\

 \hline
\end{tabular}
\end{center}
\end{table}

\begin{table}[t!]
\begin{center}
	\caption{Performance of our method on ImageNet and ResNet-18 model and some baselines on various intrinsic saliency metrics proposed in prior work.
	We find that while both our masks (learned with and without TV) have very good performance on the insertion metric, the mask learned with TV has much better performance on the saliency metric.
	The deletion metric is uninformative in most cases, since most methods are as good (or worse) compared to a random mask.}
	\label{table:ImageNetResNet-18}
	\vspace{0.1in}
\begin{tabular}{ |c|c|c|c| c| c| } 
 \hline
 & Gradient $\odot$ Input & Our method & Our Method & Smooth-Grad & Random \\ 
 &  & ($\lambda_{TV}=0.01$) & ($\lambda_{TV}=0$) & saliency &   \\ 
 \hline
 Deletion $\downarrow$       &    0.10   &  0.13  &     0.21     &    0.08 & 0.13   \\

Insertion (blur) $\uparrow$      &   0.44   & 0.79   &   0.85  & 0.51 & 0.31 \\ 
Insertion (gray) $\uparrow$      &   0.30   & 0.67  &    0.92        & 0.35 & 0.13 \\ 
Saliency Metric $\downarrow$        & 0.31    & 0.15    & 0.32          &     0.32  &  0.32 \\ 
 \hline
\end{tabular}
\end{center}
\end{table}

\myparagraph{Effect of ensembling.} In order to investigate the effect of ensembling we plot maps in \Cref{fig:ImageNet_varyk} as we vary the number of maps that are ensembled over as $K \in \{1,2,4\}$, where we learn multiple masks such that each of them individually validate the label, but are as disjoint as possible.
We do not upsample (using a scale of 1.0) and we use a fixed L1 regularization of 2e-5 and a fixed TV regularization of 0.0.

\begin{figure}[th!]
\centering
\begin{subfigure}[t]{.85\textwidth}
\centering
\includegraphics[width=\textwidth]{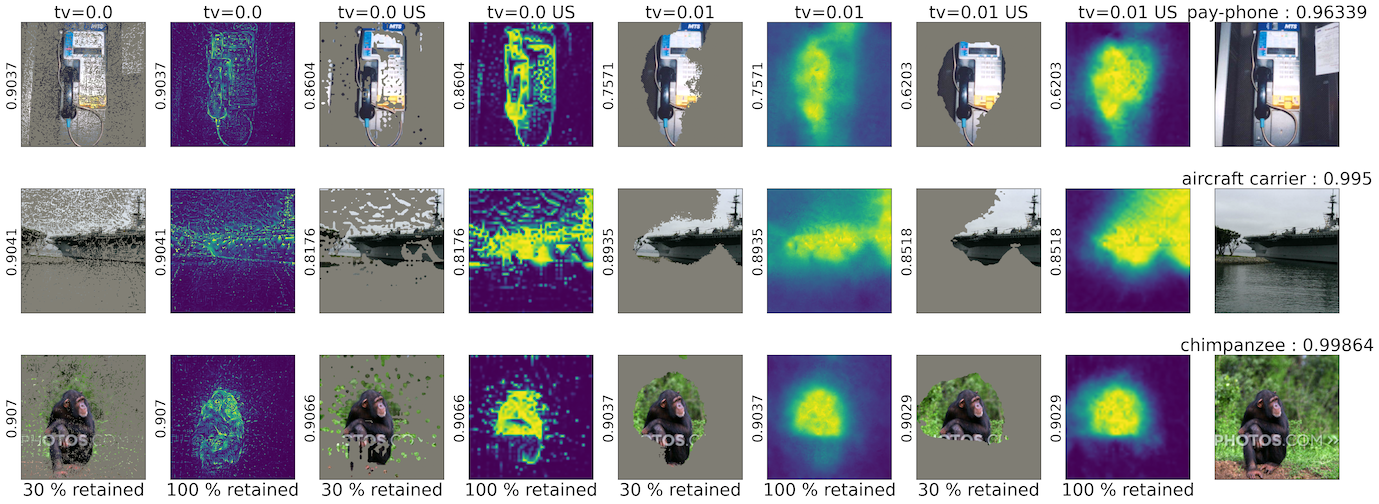}
\caption{}
\label{subfig:imgtrue}
\end{subfigure}
\begin{subfigure}[t]{.85\textwidth}
\centering
\includegraphics[width=\textwidth]{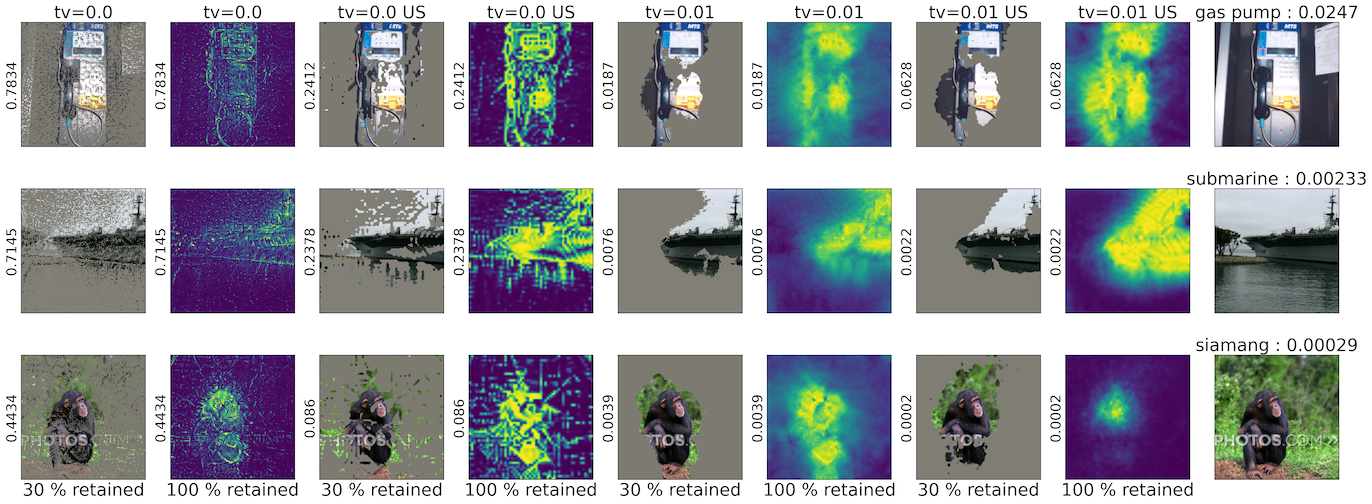}
\caption{}
\label{subfig:imgsbest}
\end{subfigure}
\caption{Details in \Cref{subsection:rn18ImageNet}. US stands for upsampled mask, where we derive a (56,56) mask and interpolate to (224,224). {\bf Panel \ref{subfig:imgtrue} } Masks learned for the correct label of ImageNet images using the procedure outlined in \Cref{sec:procedures} on ResNet-50. Columns (1,3,5,7) depict masked images at 30 (retained) \% mask sparseness. Columns (2,4,6,8) depict the original mask. TV values shown above. Original image shown in rightmost column. Model probability of correct label for masked images on y axis. {\bf Panel \ref{subfig:imgsbest}} Masks learned for the second most probable label of ImageNet images using the procedure outlined in \Cref{sec:procedures} on ResNet-50. Columns (1,3,5,7) depict masked images at 30 \% mask sparseness. Columns (2,4,6,8) depict the original mask.TV values shown above. Original image shown in rightmost column. Model probability of second best label for masked images on y axis. We find, unsurprisingly, that adding TV regularization and upsampling make the mask more continuous and ``human interpretable'' and, more importantly, make it harder to find masks that can get high probability for the second best label, thus ensuring higher soundness.}
	\label{fig:ImageNet_rn18}	
\end{figure}

\begin{figure}[th!]
\centering
\includegraphics[scale = 0.84]{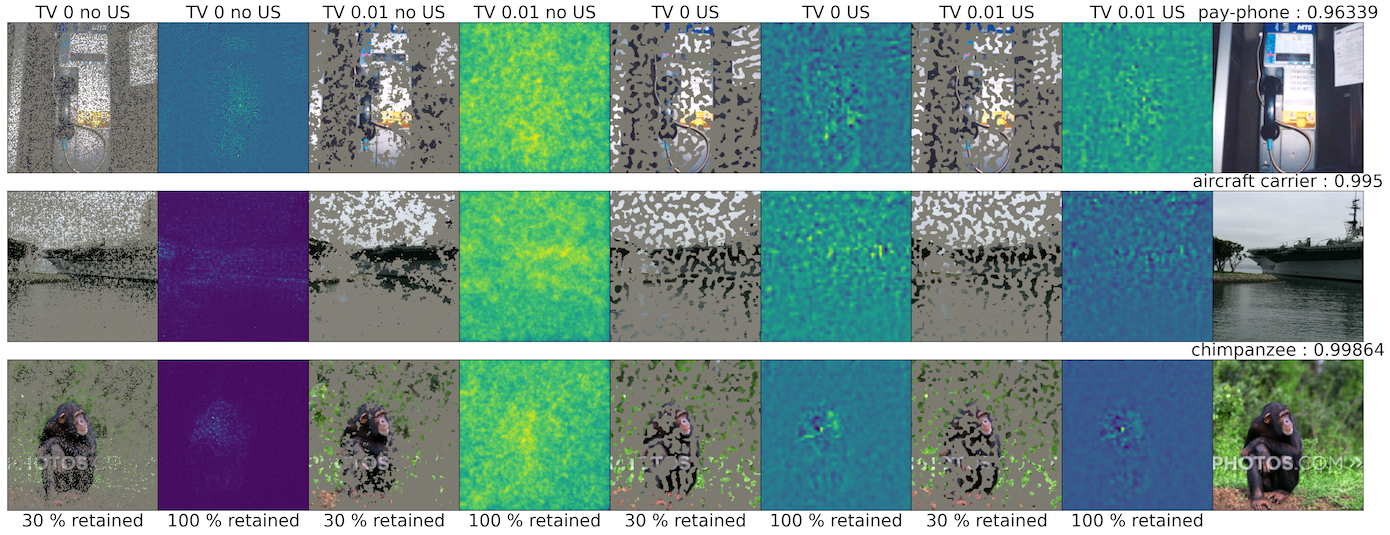}
\caption{Results of randomizing the last layer of a ResNet-18 model on ImageNet data for the procedure described in  \Cref{sec:procedures}. US indicates a $(56,56)$ map was learned and upsampled to $(224,224)$. We find the maps of this randomized network are less visually coherent than the analogous maps of a pre-trained model.}
	\label{fig:ImageNet_sanity}	
\end{figure}

\begin{figure}[th!]
\centering
\includegraphics{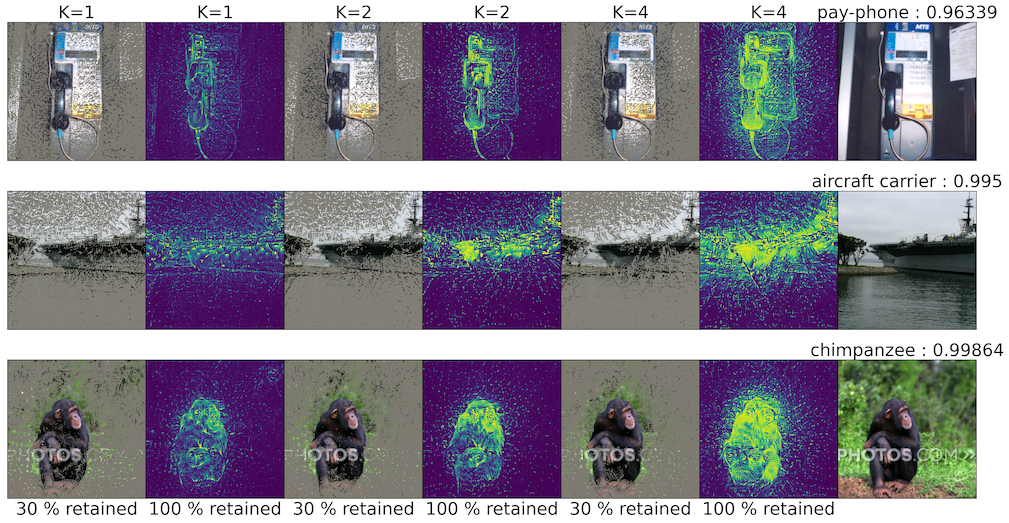}
\caption{{\bf Effect of ensembling} Partial statistical assignments validating the correct label of ImageNet and ResNet-18 images as we vary K, the number of maps. Details in \Cref{subsection:rn18ImageNet}.}
	\label{fig:ImageNet_varyk}	
\end{figure}

\subsection{Effect on Sanity Checks}
\label{subsec:sanity}
Inspired by \citep{adebayo2018sanity} we randomize the last layer of a ResNet-18 network and visually inspect the resulting saliency maps in Figure \ref{fig:ImageNet_sanity}. We find that the maps appear less coherent than those of a pre-trained model. We use a fixed L1 regularization of 2e-5 and depict maps with and without upsampling (US) at TV values of $(0, 0.01)$.

\section{Additional Background information}
\label{sec:additionalbackground}

	\subsection{Additional related work}
	\label{subsec:missing}
	The pixel replacement strategy we used is closely related to hot deck imputation \citep{rubin1976inference}, where features may be replaced either by using the mean feature value (analogous to replacing with grey) or sampling from the marginal feature distribution (analogous to replacing with image pixels sampled from other training images). Some prior work \citep{carter2019made} has found that mean imputation does not significantly affect model output on the beer aroma review dataset. On Imagenette, by contrast, we found that replacement strategy can matter.
	
	\myparagraph{Additional saliency evaluation tests.}
	\citet{adebayo2018sanity}
proposed ``sanity checks'' for saliency maps. They note that if a model's weights are randomized, it has not learned anything, and therefore the saliency map should not look coherent. They also randomize the labels of the dataset and argue that the saliency maps for a model trained on this scrambled data should be different than the saliency maps for the model trained on the original data. We study the effect of model layer randomization on our method in Appendix section \ref{subsec:sanity} and find that randomizing the model weights does cause our saliency maps to look incoherent. 
\citet{tomsett2020sanity} also discover sanity checks for saliency metrics, finding that saliency evaluation methods can yield inconsistent results. They evaluate saliency maps on reliability, i.e. how consistent the saliency maps are. To measure a method's reliability, because access to ground truth saliency maps are not available, they use three proxies 1) inter-rater reliability, i.e. how whether a saliency evaluation metric is able to consistently rank some saliency methods above others, 2) inter-method reliability, which indicates whether a saliency evaluation metric agrees across different saliency methods, and 3)internal consistency reliability, which measure whether different saliency methods are measuring the same underlying concept. \\
\myparagraph{Saliency axioms.} \citet{sundararajan2017axiomatic} identify two fundamental axioms, Sensitivity (versions a and b), and Implementation Invariance that attribution methods should satisfy.  "An attribution method satisfies Sensitivity(a) if for every input and baseline that differ in one feature but have different predictions then the differing feature should be given a non-zero attribution." The definition of Sensitivity b is "If the function implemented by the deep network does not depend (mathematically) on some variable, then the attribution to that variable is always zero.". Implementation invariance simply states that if two networks are equivalent, the saliency maps for those two networks should be the same. These axioms are not captured by completeness and soundness and are good examples of why completeness and soundness cannot be used alone in evaluating saliency maps. Other prior work \citet{carter2019made} argues that saliency explanations should be minimal \citet{carter2019made} and find sufficient input subsets which are "minimal subsets of features whose observed values alone suffice for the same decision to be reached, even if all other input feature values are missing."
 	
	\subsection{Saliency Methods}
We give a partial list of extant saliency methods here. We broadly categorize explanations into three categories: Back-propagation based explanations, axiomatic methods, and masking methods.   { \bf Backpropagation based explanations} shape credit as it is propagated backwards through the neural network according to certain rules.  These approaches include {\bf Layerwise Relevance Propagation} \citep{binder2016layer} which satisfies completeness, {\bf Rect-Grad} which thresholds internal neuron activations \citep{kim2019saliency}, and {\bf DeepLIFT} which satisfies the summation to delta rule.

\myparagraph{Axiomatic methods.} Axiomatic methods decompose the ouput (typically the logit) according to certain axioms like fairness in Shapley based methods {\bf SHAP} \citep{NIPS2017_8a20a862} and {\bf conceptSHAP} \citep{yeh2020completeness}.  We also include gradient based approaches like {\bf Gradient} $\frac{\partial S}{\partial x}$ \citep{baehrens2010explain} which calculates the partial derivative of the logit with respect to the input. {\bf Gradient $\odot$ Input} \citep{shrikumar2017learning} $\frac{\partial S}{\partial x} \cdot x$, which elementwise multiplies the gradient explanation by the input, and {\bf Grad-CAM} \citep{selvaraju2016grad} which takes the gradient of the logit with respect to the feature map of the last convolutional unit of a DNN. {\bf Smooth-Grad} \citep{smilkov2017smoothgrad}, which averages the {\bf Gradient $\odot$ Input} explanation over several noisy copies of the input $x + \eta$, where $\eta$ is some Gaussian.
The previous methods are intrinsic in the sense that they aim to explain the model decision. The last category of saliency maps, namely masking methods, also aim to explain the model decision, but they frequently aim to do so in a way that is interpretable by a human.
Contrastive methods, such as contrastive layerwise propagation \citet{gu2018understanding}, also modify LRP by constructing class specific saliency maps, with the goal of object localization, i.e. in an image of an elephant and zebra, the saliency map for elephant should have high overlap with the elephant, and similarly for the corresponding map for zebra. 

\myparagraph{Masking Methods.} Masking Methods are often evaluated using a pointing game or WSOL metric, which measures overlap with human labeled bounding boxes or explanations. These masking methods include techniques based on averaging over randomly sampled masks \citep{petsiuk2018rise}, optimizing over meaningful  mask perturbations \citep{fong2017interpretable}, and real time image saliency using a masking network \citep{dabkowski2017real}. Pixels that have been removed from the image by the mask may be replaced by graying out, by Gaussian blurring as in \citet{fong2017interpretable}, or with infillers such as CA-GAN \citep{yu2018generative} used in \citet{chang2018explaining}, or DFNet \citep{hong2019deep}. \citet{de2020decisions} find masks using differentiable masking. \citet{taghanaki2019infomask} introduce a method that results in more accurate localization of discriminatory regions via mutual information. 

\myparagraph{Pruning and information theory} \citet{khakzar2019improving} improve attribution via pruning.  \citet{schulz2019restricting} improve attribution by adding noise to intermediate feature maps. 

\myparagraph{Saliency and Boolean Logic.}  Previous work has also drawn connections between saliency and notions in logic. \citet{ignatiev2019relating} relates saliency explanations and adversarial examples by a generalized form of hitting set duality. \citet{ignatiev2019abduction} develops a constraint-agnostic solution for computing explanations for any ML model. \citet{macdonald2019rate} develop a rate distortion explanation for saliency maps and prove a hardness result. \citet{mu2020compositional} find a procedure for explaining neurons by identifying compositional logical concepts. \citet{zhou2018interpreting} describe network dissection, which provides labels for the neurons of the hidden representations. We are unaware of frameworks like Section~\ref{sec:stat}.

\myparagraph{Arguments about saliency.} For discussion including pro/cons  of various methods  some starting points are \citet{seo2018noise}  \citet{fryer2021shapley} \citet{gu2018understanding} \citet{sundararajan2020many}.

\myparagraph{\citet{phang2020investigating}}
We describe separately the masking procedure used by \citet{phang2020investigating}.
They begin by taking a pretrained model on ImageNet. The masker has access to the internal representations of the pre-trained model, and tries to maximize masked in accuracy and masked out entropy. They do not provide the ground truth label to the masker. 

\subsection{Saliency Evaluation Methods}

Saliency evaluation methods attempt to evaluate the quality of a saliency map. Many interpret the heatmap values as a priority order of saliency. 
Extrinsic evaluation metrics include the {\bf WSOL} metric, which aim to measure overlap of the saliency map with a human annotated bounding box and the {\bf Pointing Game} metric proposed by \citet{zhang2018top} in which a pixel count as a hit if it lies within a bounding box and a miss otherwise, and the metric is $\frac{\text{\# Hits}}{\text{\# Hits + \#Misses}}$.
Other more intrinsic methods include early saliency evaluation techniques like {\bf MorF} and {\bf LerF} \citet{samek2016evaluating}, which involve removing pixels either in the order of highest importance or lowest importance and observing the area of the resulting curve. {\bf  Insertion and Deletion Games} of~\citet{petsiuk2018rise} uses this too. The deletion game measures the drop in class probability as important pixels are removed, while the insertion game measures the rise in class probability as important pixels are added. (Our AUC discussion in Section~\ref{sec:stat} relates to this.)
{\bf Remove and Retrain (ROAR)} is a saliency evaluation method proposed by \citet{hooker2019benchmark}. Input features are ranked and then removed according to a saliency map. A new model is trained on the modified training set, and a larger degradation in accuracy on the modified test set compared to the original model on the original test set is regarded as a better saliency method. (NB: retraining makes this a non-intrinsic method.)
Previous work has also introduced datasets specifically designed to test saliency methods. {\bf BAM} \citet{yang2019benchmarking} creates saliency maps by pasting object pixels from MSCOCO \citet{lin2014microsoft} into scene images from MiniPlaces \citet{zhou2017places}.  The  {\bf Saliency Metric} proposed by \citet{dabkowski2017real} thresholds saliency values above some $\alpha$ chosen on a holdout set, finds the smallest bounding box containing these pixels, upsamples and measures the ratio of bounding box area to model accuracy on the cropped image, $s(a,p) = \log(\max(a,0.05)) - \log(p)$ where a is the area of the bounding box and p is the class probability of the upsampled image.  

 	\subsection{Saliency computations and underlying meanings of saliency}
	For simplicity this discussion assumes the datapoints are images and the classifier is a deep net. The heatmap in the saliency method is trying to highlight the contribution of individual pixels to the final answer. This is analogous to how a human may highlight relevant portions of the image with a plan. (Classic saliency methods in vision are inspired by studies of human cognition.) Saliency methods operationalize this intuitive definition in different ways, and we try to roughly categorise these as follows.  
	
	\myparagraph{Variational interpretation.}These interpret saliency in terms of effect on final output due to change in a single pixel --captured either via partial derivative of output with respect to pixel value (i.e., effect of infinitesimal change), or via change of output when this pixel is set to $0$ or to a random (or "gray")  value. Examples include {\bf Gradient}, {\bf Gradient $\odot$ Input}  \citet{shrikumar2017learning}, {\bf Occlusion  }
	
	\myparagraph{Credit attribution guided by gradient.}These  use the gradient to guide the assignment of saliency values. The gradient is interpreted as propagating values from the output to the input layer, and the values are partitioned/recombined at internal nodes of the net following some conservation principles. A key goal is to ensure {\em completeness}, which means that the sum of the attributions equal the logit value.  Examples include {\bf LRP}, {\bf DeepLIFT} \citet{shrikumar2017learning}, {\bf Rect-Grad} Let $a_i^{l}$ be the activation of some node in layer $l$, and $R_i^{l+1}$ be the backpropagated gradient up to $a_i^{l}$. Rect-grad  replaces the vanilla chain rule, $R^{l} = \mathbf{1}[a_i > 0]$ with the rule that $R_{i}^l = \mathbf{1}[R^{l+1}_ia_i > \tau]$ for some threshold $\tau$. Hence, during a backward pass preference is given to nodes with large margin.
	
	\myparagraph{Ensembling on top of above two ideas.} Ensembling methods combine saliency estimates over multiple inputs an an attempt to reduce noise in the final map. Examples include {\bf Smooth-Grad}, Occlusion based methods, etc. We also include {\bf Shapley Values} in this list.
	
	The Shapley value aims to fairly distribute credit among a coalition of $N$ players. In the context of image saliency, each coordinate of the image input may be seen as a player, and the Shapley value computes $\sum_{S \subseteq N} \setminus \{i\} \frac{|S|! (n - |S| -1)!}{n!} (v(S \cup \{i\}) - v(S)) $. It can be interpreted as the marginal contribution of player i, over all possible orderings of the coalition. In this sense, it can be seen as an ensembling method, as it averages over all possible random permutations.

	{\bf Analysis of saliency methods.} Previous work has analyzed ensembling methods like Smooth-grad, and found that it does not smooth the gradient \citet{seo2018noise}. They conclude that Smooth-Grad does not make the gradient of the score function smooth. Rather Smooth-grad is approximately the sum of a standard saliency map and higher order terms and the standard deviation of the Gaussian noise. It has also been found that Shapley values, despite having a uniqueness result, can differ in the way they depend on the model, data, etc \citet{sundararajan2020many}.  \citet{fryer2021shapley} highlight several nuances that should be taken into account when considering Shapley values. They introduce Shapley values as averaging over submodels, and note that "the performance of a feature across all submodels may not be indicative of the particular performance of that feature in the set of optimal submodels.". They provide specific cases where satisfying the axioms of Shapley values works against the goal of feature selection.

\section{Clarifying benefit of TV regularization}
\label{app:linearcase}

This section gives more details of the discussion in \Cref{sec:linear}
about how TV regularizers help ensure soundness even in a linear setting.

Let $\DataS$ be a dataset of labeled data $(\vx, y)$ where the inputs are of unit norm and labels are binary, i.e., $\normtwosm{\vx} = 1$, $y \in \{\pm 1\}$. The model in question is a linear classifier $f(\vx) := \sgn(\dotp{\vw}{\vx})$ parameterized by the weight vector $\vw \in \sphS^{d-1}$, and it achieves the perfect accuracy on the set $\DataS$ with a margin $\gamma := \min_{(\vx, y) \in \DataS} y\dotp{\vw}{\vx} > 0$. We assume that the coordinates of $\vx$ and $\vw$ are uniformly bounded by $\frac{10}{\sqrt{d}}$, i.e., $\norminfsm{\vx} \le \frac{10}{\sqrt{d}}$,  $\norminfsm{\vw} \le \frac{10}{\sqrt{d}}$ ($10$ can be changed to any other constant).


Let $\Gamma$ be the input modification process that sets all non-salient pixels to $0$. We are interested in binary heatmaps, i.e., $\map$ assigns $1$ to pixels in some salient set $S$, and $0$ otherwise. According to
\Cref{def:basemetric}, $g(x, a, \map) = \mathbbm{E}_{\tilde{x} \sim \Gamma(x, \map)} [\onec{f(\tilde{\vx}) = a}]$. A simple calculation shows that this expectation is equal to $\onec{a \sum_{i \in S} w_i x_i > 0}$, and thus the goal is to find $S$ so that $a \sum_{i \in S} w_i x_i > 0$.

As we do not consider the full salient set informative, we are interested in salient sets with size constraint $\abssm{S} = L$ for some $1 \le L \le d$. There is a simple saliency method that achieves this goal: Given an input $\vx$ and a label $a \in \{\pm 1\}$, sort the coordinates according to $a w_i x_i$ and take the highest $L$ coordinates as the salient set $S$.

It is easy to see that this method always produces $S$ with $a \sum_{i \in S} w_i x_i > 0$. Letting $a = y$ proves the completeness. However, this method does not satisfy soundness: a salient set $S$ with $a \sum_{i \in S} w_i x_i > 0$ can also be found for $a \ne y$!

Now we see how the TV constraint helps to ensure soundness (with good probability). A vector can be seen as a 1D image, and the TV of a salient set $S$ can be defined by $\TV(S) := \sum_{i=1}^{d-1} \abs{\onec{i \in S} - \onec{i + 1 \in S}}$.
For simplicity, we consider salient sets with TV at most $2$. This means $S$ is just an interval. Given the size and TV constraints $\abssm{S} = L$, $\TV(S) \le 2$, it is easy to come out with the following saliency method: search over all the intervals of length $L$ and if an interval $S$ satisfies $a \sum_{i \in S} w_i x_i > 0$, return it as the salient set.
Fortunately, this method does satisfy both completeness and soundness, as is justified by \Cref{thm:int1} in \Cref{sec:linear}.

\thmlinear*

\begin{proof}
	Let $\map$ be any fixed interval of length $L$, associated with salient set $S$.
	The distribution of $\sum_{i \in S} w_i x_i$ is identical to the distribution of the sum of $L$ samples drawn from $\{w_1 x_1, \dots, w_d x_d\}$ without replacement. Note that $d yw_1 x_1, \dots, d yw_d x_d$ are $d$ numbers with mean $\gamma$, and their absolute values are bounded by $10^2 = O(1)$. By Chernoff bound,
	\[
	\Pr\left[\frac{1}{L} \sum_{i \in S} d yw_i x_i \le \gamma -\epsilon \right] \le e^{-\Omega(\epsilon^2 L)}.
	\]
	Set $\epsilon = \gamma$ ensures that $y\sum_{i \in S} w_i x_i > 0$ with probability $1 - e^{-\Omega(\gamma^2 L)}$.
	We can fix any interval $\map$ with $L=L_1$ to prove Item 1.
	
	Taking union bounds over all intervals of length $L$, we can see that the probability of existing an interval of length $L$ that certifies $-y$ should be no greater than $\sum_{\abssm{S} = L} e^{-\Omega(\epsilon^2 L)} \le d^2 e^{-\Omega(\gamma^2 L)}$. Setting $L = L_2$ proves Item 2.
\end{proof}

This shows that such salient sets make sense to humans: if the model predicts $y$, then we can find an interval of length $\tilde{\Omega}(1/\gamma^2)$ so that computing the inner product only in that interval leads to the same prediction; otherwise if the model does not predict $y$, such interval cannot be found. Thus it is sufficient to convince humans that the model predicts $y$ by only revealing the existence of such interval and the coordinate values in it.


\end{document}